\newtheoremstyle{mystyle}  
  {6pt}   
  {6pt}   
  {\itshape}  
  {}     
  {\bfseries} 
  {.}    
  { }    
  {}     
\newtheoremstyle{defstyle}  
  {6pt}
  {6pt}
  {\normalfont}  
  {}
  {\bfseries}
  {.}
  { }
  {}
\theoremstyle{mystyle}
\newtheorem{theorem}{Theorem}[section]  
\newtheorem{lemma}[theorem]{Lemma}      
\theoremstyle{defstyle}
\patchcmd{\proof}{\itshape}{\itshape\leavevmode}{}{}
\patchcmd{\proof}{\pushQED{\qed}}{$\hspace*{2em}\pushQED{\qed}$}{}{}
\title{Compression Hacking: A Supplementary Perspective on Informatics Metric of Language Models from Geometric Distortion}
\author{
\textbf{Jianxiang Zang}$^{1}$, 
\textbf{Meiling Ning}$^{2}$, 
\textbf{Yongda Wei}$^{3}$, 
\textbf{Shihan Dou}$^{1}$, 
\textbf{Jiazheng Zhang}$^{1}$, 
\\
\textbf{Nijia Mo}$^{3}$\textbf{,}
\textbf{Binhong Li}$^{4}$\textbf{,} 
\textbf{Tao Gui}$^{1}$\thanks{\hspace{0.5em}Corresponding author}\textbf{,} 
\textbf{Qi Zhang}$^{1}$\textbf{,} 
\textbf{Xuanjing Huang}$^{1}$
\\
\normalsize{$^{1}$ Computation and Artificial Intelligence Innovative College, Fudan University,} \\ 
\normalsize{$^{2}$ Beijing University of Posts and Telecommunications} \\
\normalsize{$^{3}$ Shanghai University of International Business and Economics} \\ 
\normalsize{$^{4}$ Hong Kong University of Science and Technology (Guangzhou)} \\
\normalsize{\texttt{jxzang25@m.fudan.edu.cn, tgui@fudan.edu.cn}}
}
\begin{document}

\maketitle

\begin{abstract}
Recently, the concept of ``compression as intelligence'' has provided a novel informatics metric perspective for language models (LMs), emphasizing that highly structured representations signify the intelligence level of LMs. However, from a geometric standpoint, the representation space of highly compressed LMs tends to degenerate into a highly anisotropic state, which hinders the LM's ability to comprehend instructions and directly impacts its performance. We found this compression-anisotropy synchronicity is essentially the ``\textbf{\emph{Compression Hacking}}'' in LM representations, where noise-dominated directions tend to create the illusion of high compression rates by sacrificing spatial uniformity.
Based on this, we propose three refined compression metrics by incorporating geometric distortion analysis and integrate them into a self-evaluation pipeline. The refined metrics exhibit strong alignment with the LM's comprehensive capabilities, achieving Spearman correlation coefficients above 0.9, significantly outperforming both the original compression and other internal structure-based metrics. This confirms that compression hacking substantially enhances the informatics interpretation of LMs by incorporating geometric distortion of representations~\footnote{Codes: https://github.com/hggzjx/compression\_hacking}.
\end{abstract}

\section{Introduction}

 Recently, significant efforts have been devoted to exploring the mechanisms by which language models (LMs) process information internally, driving the development of LM self-evaluation~\cite{wei2024diff,wang2024embedding,wang2024latent} independent of specific tasks and model outputs.
 The concept of ``compression as intelligence''~\cite{sutskever2023compressors,deletanglanguage,chen2025information} has provided a novel informatics interpretation for LMs, emphasizing that LMs eliminate redundant information through training while their representation spaces typically evolve from disordered to structured states. This property leads to a compression-based evaluation metric for LMs that utilizes differential entropy of representations, aiming to reflect model capabilities with their internal structural organization~\cite{pichler2022differential,zhouyin2023understanding,li2025semantic}. Existing studies have demonstrated strong alignment between this metric and LM scale~\cite{wei2024diff,li2025semantic}, which we have also empirically validated. However, as evidenced by the intuitive case where 175B GPT-3~\cite{brown2020language} exhibits inferior overall capabilities compared to 32B Qwen2.5-Instruct~\cite{hui2024qwen2}, compression from a purely informatics standpoint, cannot fully align with LM capabilities, especially when comparing models from different families. Therefore, our research motivation is: \emph{Beyond information compression, what other properties should a metric quantify to effectively interpret the LMs' intelligence level, and how should we model the relationships between these properties?}

Relevant studies have shown that differences in model architecture and training paradigms inevitably lead to variations in the geometric structure of representations~\cite{mimno2017strange,gao2019representation,skean2025layer}. From a geometric standpoint, we were surprised to observe that LMs with high information compression tend to exhibit representation spaces that degenerate into highly anisotropic, distorted states. Highly anisotropic representations indicate varying sensitivity to semantic changes across different dimensions, which can hinder language models' ability to comprehend instructions and consequently degrade their performance~\cite{demeter2020stolen,yu2022rare,rudman2024stable}.

In this study, we quantitatively analyze this compression-anisotropy synchronicity and validate its statistical significance. Through mechanistic analysis, we find that this phenomenon reflects the ``\textbf{\emph{Compression Hacking}}'' in LM representations, where \emph{noise-dominated directions tend to create the illusion of high compression rates by sacrificing spatial uniformity}. According to this characteristic, we propose the integration of geometric perspective to refine the information compression metric. Specifically, we introduce the following strategies: (1) a \emph{spectral entropy quantification} compression metric to model the properties of eigenvalue distributions; (2) a \emph{semantic coefficient of variation} to measure anisotropy relative to compression; and (3) a \emph{manifold correction protocol} that uses Principal Component Smoothing (PCS) as an ``anisotropy razor'' to decouple the influence of anisotropy on compression. These refined metrics are integrated into a self-evaluation pipeline that relies entirely on the LM's internal structure.  

Using this framework, we evaluate 18 open-source LMs and conduct meta-evaluations on factuality, reasoning, math, and knowledge tasks to obtain ground-truth capability scores. Extensive experiments demonstrate that the refined metrics exhibit strong alignment with the LM's comprehensive capabilities, achieving Spearman correlation coefficients above 0.9, which significantly outperforms both the original compression and other internal structure-based metrics. This validates that compression hacking substantially enhances the informatics interpretation of LMs by incorporating geometric distortion analysis of representations.
The main contributions are summarized as follows:

\begin{itemize}[left=0pt]  
    \item We introduce a significant characteristic in LM representations termed ``compression hacking'', which complements the concept of ``compression as intelligence'' from the perspective of geometric distortion.
    \item According to compression hacking, we propose three refinements of compression metrics incorporating geometric insights: spectral entropy quantification, semantic coefficient of variation, and manifold correction protocol.  
    \item The refined metrics exhibit significantly stronger alignment with LM's comprehensive capabilities compared to the original compression metric, thereby establishing a task-agnostic self-evaluation perspective for LMs.
\end{itemize}

\section{Compression Hacking}

In this section, we analyze the compression-anisotropy synchronicity in LM representations, where highly compressed LMs tend to exhibit in-context representations with strong anisotropy. Our investigation proceeds in two stages: First, we quantify both compression and anisotropy metrics by examining the internal structure of LM representations (covariance matrices). We then fit regression curves to model the relationship between anisotropy and compression, verifying its statistical significance. Second, through mechanistic analysis, we identify the underlying cause of this phenomenon, what we term ``compression hacking''.

The covariance matrix of LM representations reflects their internal structure. For the hidden states \( \mathbf{Z} = \{\mathbf{z}(\bm{w})|\bm{w}\in \mathcal{V}\} \), where \( \bm{w} \) represents a word and \( \mathcal{V} \) represents the sample vocabulary space, the construction of the covariance matrix is as formulated in Eq.~\ref{eq.sigma}. Here, \( \mathbf{z}(\bm{w}) \in \mathbb{R}^{D}\) represents the token embeddings, which has been normalized. \(\mathbf{Z} \) is a zero-mean matrix. 

\begin{equation}
\Sigma_{\mathbf{Z}} = \frac{1}{|\mathcal V|} \mathbf{Z}^{\top}\mathbf{Z}+\alpha \mathbf{I}_{D}\label{eq.sigma}  
\end{equation}    

Here, \( \Sigma_{\mathbf{Z}} \in \mathbb{R}^{ D\times D} \) denotes the covariance matrix, and a regularization term \( \alpha \mathbf{I}_{D} \) is added to ensure it is full rank. The matrix \(\Sigma_{\mathbf{Z}}\) is positive definite and can be decomposed using eigenvalue decomposition as \(\Sigma_\mathbf{Z} = \mathbf{Q} \Lambda \mathbf{Q}^\top\). The eigenvalues from \( \Lambda \) are \( \{\lambda_d\}_{d=1}^D \), arranged in descending order by default, and \( \{\mathbf{q}_d\}_{d=1}^D \) are the corresponding eigenvectors.

\subsection{Preliminary: Differential Entropy based Compression Metric}

The compression perspective provides an information-theoretic foundation for LM evaluation, revealing the intrinsic connections between model scale, generalization capability, and data volume, thus offering theoretical guidance for optimizing model design~\cite{pichler2022differential,sutskever2023compressors,deletanglanguage,wei2024diff,chen2025information}. Related studies have shown that the differential entropy \(\mathcal{H}_{\text{DE}}(\mathbf{Z}) = -\mathbb{E}_{\bm{w}\sim\mathcal{V}} \mathbf{z}(\bm{w})\log\mathbf{z}(\bm{w})\) of LM representations \(\mathbf{z}(\bm{w})\) can reflect their compression capacity~\cite{cheninside,zhouyin2023understanding,li2025semantic}. Lower differential entropy suggests that the representations formed by nonlinear transformation, which removes redundant information, are closer to optimal coding. These representations exhibit more concentrated distributions and lower uncertainty, reflecting more efficient information compression~\cite{deletang2023language}. Semantic Volume leverages this property to model representation uncertainty~\cite{li2025semantic}.  

We thus define compression metric as the negative differential entropy of representations (i.e., \(\mathcal{C}_{\text{DE}}(\mathbf{Z})\overset{\rm def}{=}-\mathcal{H}_{\text{DE}}(\mathbf{Z})\)). Since the differential entropy is equivalent to the logdet estimator~\cite{cheninside} of their covariance matrix, the compression metric follows the definition in Eq.~\ref{eq.compression}.  

\begin{equation}
\begin{aligned}
\mathcal{C}_{\text{DE}}(\mathbf{Z})&\overset{\rm def}{=} -\frac{1}{2}\text{logdet}\left(\Sigma_{\mathbf{Z}}\right)=-\frac{1}{2}\sum_{d=1}^D\log\lambda_d\label{eq.compression}
\end{aligned}
\end{equation}


\begin{figure}[h]
\centering
\includegraphics[width=1\linewidth]{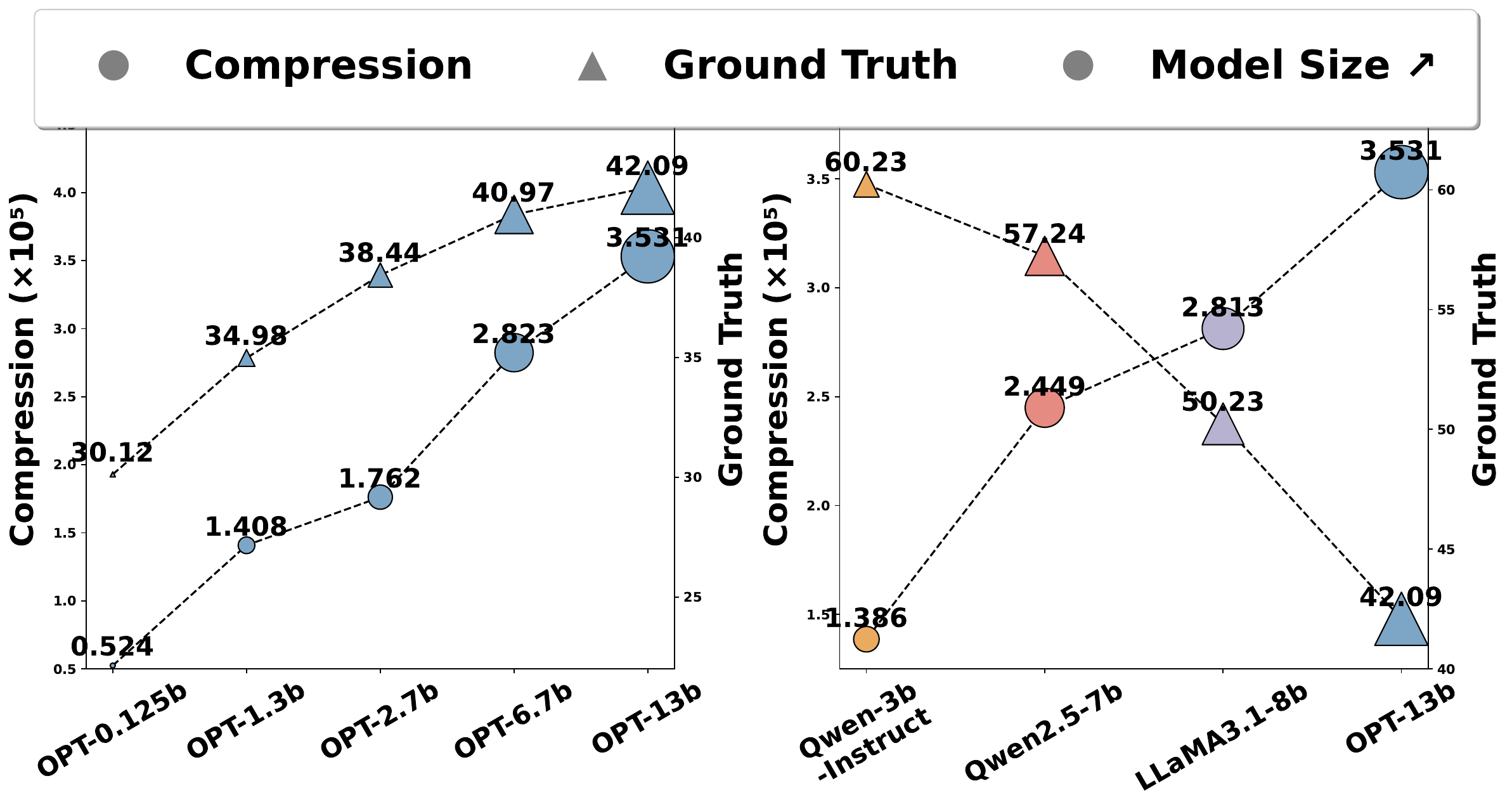}
\caption{Comparison of compression metrics across different models and their corresponding ground-truth comprehensive capabilities, categorized into intra-family and cross-family comparisons.}\label{fig.scatter}
\end{figure}

We first conducted preliminary exploration to assess whether differential entropy-based compression metrics effectively reflect LM capabilities. Our evaluation included both intra-family (OPT family) and cross-family tests (Qwen2.5-3b-Instruct, Qwen2.5-7b, LLaMA3.1-8b, and OPT-13b), with ground-truth settings following Section~\ref{sec.setup}. As shown in Figure~\ref{fig.scatter}, we found that compression metrics showed only positive correlations with model scale, consistent with related studies~\cite{wei2024diff,li2025semantic}.
However, Figure~\ref{fig.scatter}(left) indicates that differential entropy-based compression is effective only for intra-family evaluation, while Figure~\ref{fig.scatter}(right) reveals its limited applicability across diverse architectures and training paradigms. These findings prompted our integration of geometric properties into compression analysis.

\begin{figure*}[t]
\centering
\includegraphics[width=0.95\textwidth]{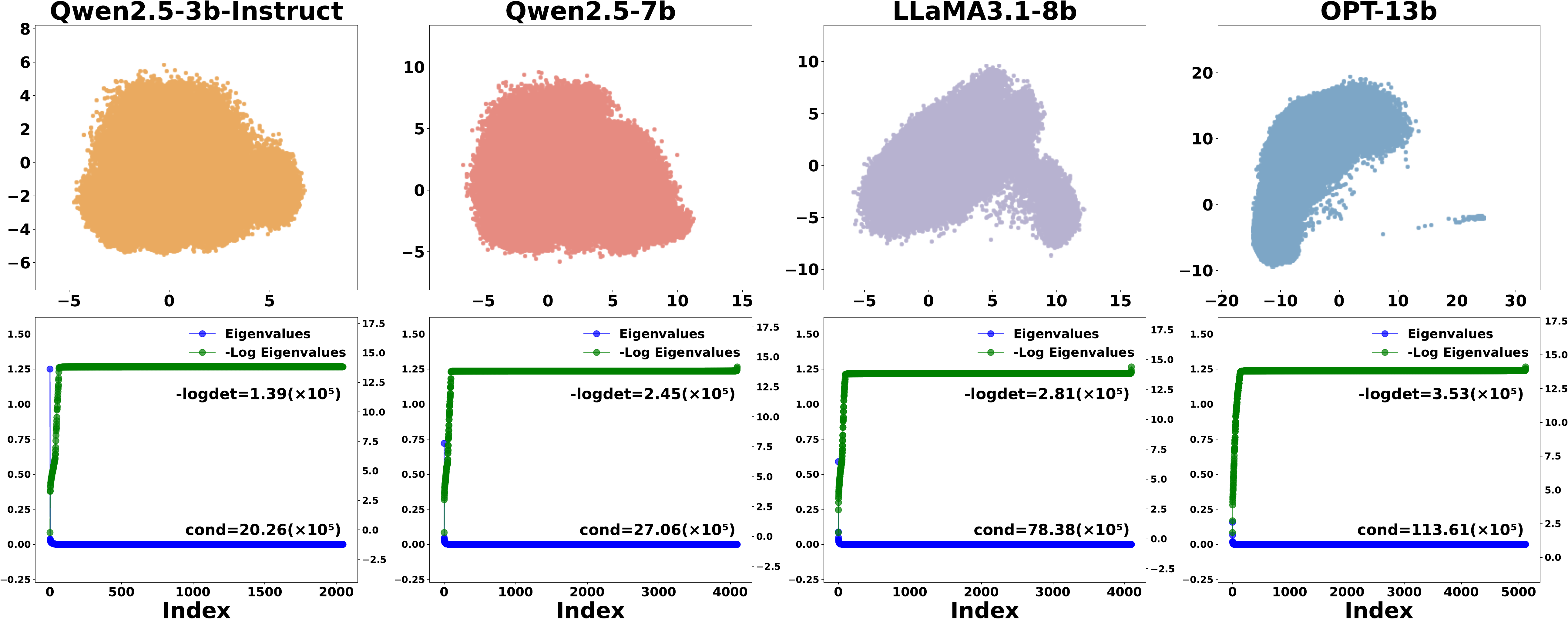}
\caption{Visualization of distribution of in-context representations and the eigenvalues across different models.}\label{fig.image2}
\end{figure*}

\subsection{Anisotropy: The Geometric Property Correlated with Compression}

The anisotropy of language models is a geometric property of representations that reflects the non-uniform distribution of semantics across different directions in the representation space~\cite{ethayarajh2019contextual,cai2021isotropy,demeter2020stolen}. Highly anisotropic representations hinder LMs' ability to comprehend instructions~\cite{yu2022rare,rudman2024stable}, directly impairing their overall capabilities. We performed principal component analysis to visualize the in-context representation spaces of the aforementioned four models. As shown in Figure~\ref{fig.image2}, we made the intriguing observation that models with higher compression levels consistently exhibited greater unevenness in their dimensional distributions, namely, higher anisotropy. This suggests a potential synergistic relationship between compression and anisotropy. If we can quantify this relationship and confirm its statistical significance, it could provide valuable guidance for refining compression metrics.

Current tools for qualitatively and quantitatively analyzing the anisotropy of language models mainly rely on similarity computations of representations~\cite{ethayarajh2019contextual,cai2021isotropy,rudman2022isoscore}. However, what we need is an anisotropy metric that can establish a connection with entropy-based information compression. Relevant studies~\cite{arora2016latent,mu2018all} have shown that the anisotropy measure \(\mathcal{A}\) is mathematically defined as formulated in Eq.~\ref{eq.AA}. We aim to extend this measure to relate to the internal structure of representations (eigenvalues of the covariance matrix).

\begin{equation}
\mathcal{A} = \frac{\max_{\|\mathbf{c}\|=1} \mathcal{Z}(\mathbf{c})}{\min_{\|\mathbf{c}\|=1} \mathcal{Z}(\mathbf{c})}\label{eq.AA} 
\end{equation}

where \( \mathcal{Z}(\mathbf{c})=\sum_{\bm{w} \in \mathcal{V}} \exp\left(\mathbf{c}^\top \mathbf{z}(\bm{w})\right) \) is the original partition function should approximately be a constant for any unit vector \(\mathbf c\). \( \mathcal{A} \) is a number greater than 1, where larger values indicate stronger anisotropy in the representation space. Ideally, this value should be as close to 1 as possible.  
Considering that \( \arg\max_{\|\mathbf{c}\|=1} \mathcal{Z}(\mathbf{c}) \) and \( \arg\min_{\|\mathbf{c}\|=1} \mathcal{Z}(\mathbf{c}) \) do not have closed-form solutions, we attempt to approximate \( \mathcal{Z}(\mathbf{c}) \) via Taylor expansion as formulated in Eq.~\ref{eq.taylor}.    

\begin{equation}
\begin{aligned}
\mathcal{Z}(\mathbf{c}) = |\mathcal{V}| + \mathbf{1}^\top_{|\mathcal{V}|}\mathbf{Z}\mathbf{c} + \frac{1}{2}\mathbf{c}^\top\mathbf{Z}^\top\mathbf{Z}\mathbf{c} \\+ \sum_{m=3}^\infty\frac{1}{m!}\sum_{\bm{w} \in \mathcal{V}} \left(\mathbf{c}^\top \mathbf{z}(\bm{w})\right)^m\label{eq.taylor}   
\end{aligned}
\end{equation}


Considering that \(\mathbf{Z}\) is zero-mean data, the mean of \(\mathbf{z}(\bm{w})\) is 0. Therefore, the linear term can also be simplified to 0, that is,
\(
\mathbf{1}^\top_{|\mathcal{V}|} \mathbf{Z} \mathbf{c} = \left(\sum_{\bm{w}\in \mathcal{V}} \mathbf{z}(\bm{w})\right)^\top \mathbf{c} = \mathbf{0}^\top \mathbf{c} = 0
\)
which will not affect the relative changes of \(\mathcal{Z}(\mathbf{c})\) in different directions.
The quadratic term involves the spectral properties of the matrix, whose eigenvalues describe the directional variability of \(\mathbf{Z}^\top\mathbf{Z}\), playing a dominant role in the changes of \(\mathcal{Z}(\mathbf{c})\) in different directions. Expanding \(\mathbf{c}\) in the eigenvector basis, we have \(\mathbf{c}=\mathbf{Q}\mathbf{u}\), where \(\|\mathbf{u}\|=\|\mathbf{c}\|=1\) and \(\{u_d\}_{d=1}^D\) are the components of \(\mathbf u\). Based on the eigenvalue decomposition, the calculation of Eq.~\ref{eq.2'} is made.

\begin{equation}
\begin{aligned}
\mathbf c^\top \mathbf{Z}^\top\mathbf{Z}\mathbf c = \left(\mathbf Q\mathbf u\right)^\top\mathbf{Z}^\top\mathbf{Z} \left(\mathbf Q\mathbf u\right)=\mathbf u^\top \Lambda \mathbf u\label{eq.2'}    
\end{aligned}
\end{equation}

Accordingly, we can further obtain the second-order estimate of \(\mathcal{A}\) as formulated in Eq.~\ref{eq.A}.

\begin{equation}
\begin{aligned}
\mathcal{A} \approx \frac{|\mathcal{V}| + \max_{\|\mathbf{c}\|=1} \frac{1}{2} \mathbf{c}^\top \mathbf{Z}^\top \mathbf{Z} \mathbf{c}}{|\mathcal{V}| + \min_{\|\mathbf{c}\|=1} \frac{1}{2} \mathbf{c}^\top \mathbf{Z}^\top \mathbf{Z} \mathbf{c}} \\= \frac{|\mathcal{V}| + \max_{\|\mathbf{u}\|=1} \frac{1}{2} \sum_d \lambda_du^2_d}{|\mathcal{V}| + \min_{\|\mathbf{u}\|=1} \frac{1}{2} \sum_d \lambda_du^2_d}\label{eq.A}
\end{aligned}
\end{equation}

When the components of the vector \(\mathbf{u}\) are entirely concentrated in the direction corresponding to the maximum (minimum) eigenvalue, \(\mathbf{u}^\top \Lambda \mathbf{u} = \max_d\lambda_d (\min_d\lambda_d)\). We observed that the anisotropy of the representation can be measured by the condition number of the matrix, as formulated in Eq.~\ref{eq.cond}. The condition number reflects the sensitivity of the covariance matrix and reveals the characteristics of ill-conditioning from an intrinsic structural perspective, making it the first anisotropy metric entirely based on internal structure.

\begin{equation}
\begin{aligned}
\mathcal{A}(\mathbf{Z})\overset{\rm def}{=}\operatorname{cond}\left(\Sigma_{\mathbf{Z}}\right)=\frac{\max_{d=1}^D\lambda_d}{\min_{d=1}^D\lambda_d}\label{eq.cond}
\end{aligned}
\end{equation}

\subsection{Systematic Analysis}

\textbf{Mechanistic Analysis} As shown in Figure~\ref{fig.image2}, by performing eigenvalue decomposition on the covariance matrix of the representations, we discovered a distinctive partitioning phenomenon in the eigenvalues of the LM covariance matrix. The leading principal components exhibit an exponential decay in eigenvalues, effectively condensing the model's core semantic information, while the numerous subsequent minor components demonstrate clustered, nearly constant low eigenvalues, forming spatially anisotropic perturbation sources. Interestingly, when measuring information compression using a negative logarithmic scale, the minor components show dramatically inflated compression metrics due to their infinitesimal original eigenvalues, creating an inverted relationship with the principal component region. This seemingly paradoxical phenomenon actually reveals the compression hacking in model representations, where \textbf{\emph{noise-dominated directions tend to create the illusion of high compression rates by sacrificing spatial uniformity}}, while in reality this ``compression'' represents either information loss or noise amplification, with truly effective information compression being exclusively accomplished by the principal components.

\noindent\textbf{Significance Analysis} Next, we analyze the significance of compression hacking, which manifests as compression-anisotropy synchronicity. Based on the aforementioned metrics, we calculated the estimates of both compression and anisotropy for instruction representations across four LMs in our preliminary experiments, both of which can be exclusively represented by the eigenvalues of the representation covariance matrix. Given their characteristic patterns, we modeled a linear regression of compression against the logarithmic values of anisotropy, as shown in Figure~\ref{fig.cac}.
The regression analysis reveals two key findings through R² and p-values: (1) \emph{compression as the dependent variable can be well and significantly explained by anisotropy}, and (2) Mann-Whitney U tests~\cite{mcknight2010mann} confirm \emph{statistically significant differences in regression curves across different models}. 

\begin{figure}[t]
\centering
\includegraphics[width=1\linewidth]{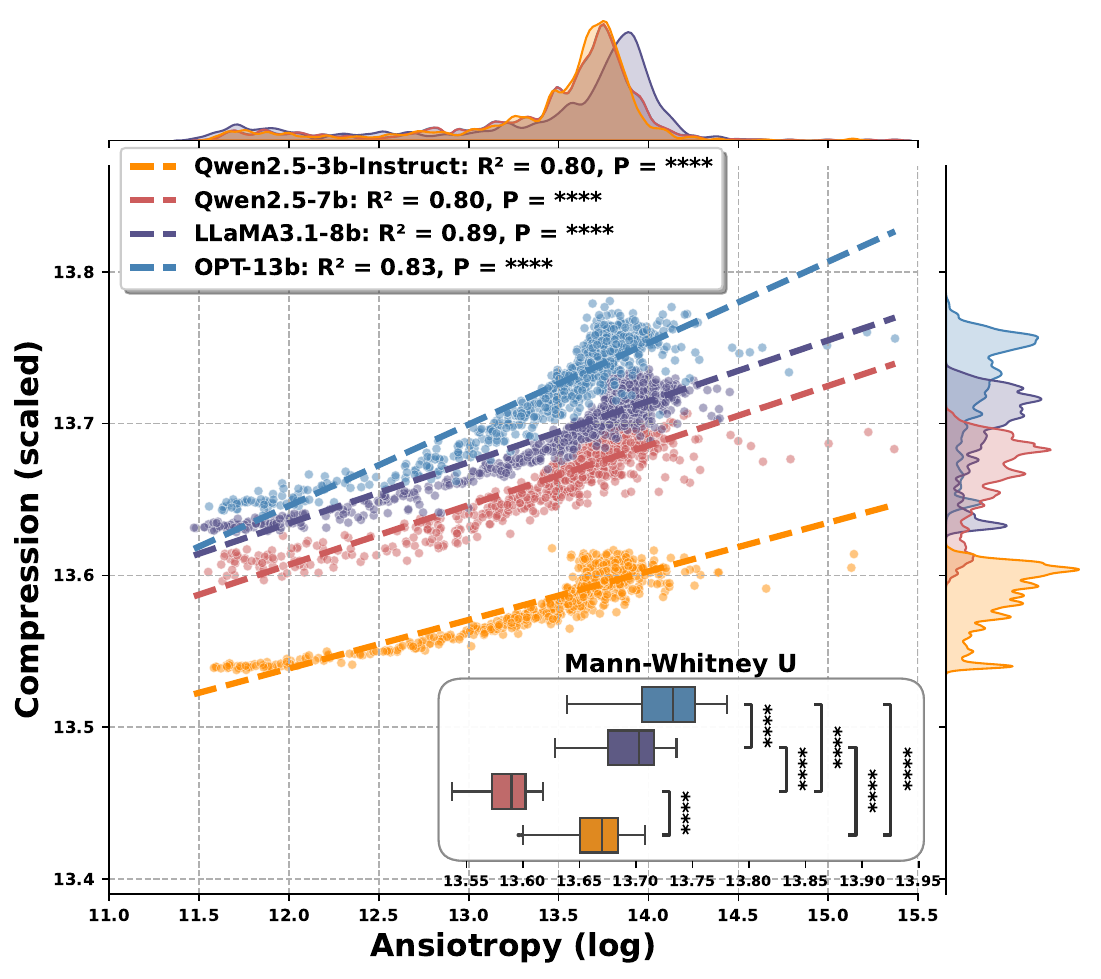}
\caption{Regression fitting curves of compression versus anisotropy for different models, along with Mann-Whitney U tests between them. Here, **** denotes statistical significance at the 0.01\% level.}\label{fig.cac}
\end{figure}

\section{Methodology}

\subsection{Refined Metrics}

We have demonstrated that the compression-anisotropy synchronicity caused by compression hacking in LMs is a statistically significant characteristic. This implies that we can develop more comprehensive metrics by jointly considering the compression and anisotropy of representations, as well as modeling their correlation. In this section, we formalize our approach through three strategies:

\begin{figure*}[t]
\centering
\includegraphics[width=1\textwidth]{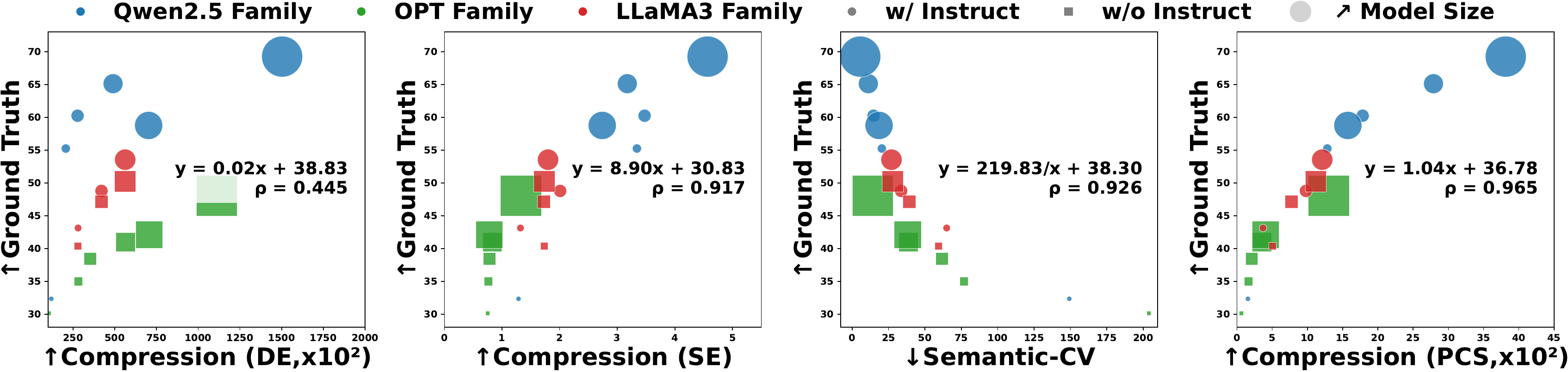}
\caption{Scatter plots of ground truth values across different models for the four metrics, along with fitted regression equations and Spearman correlation coefficients.}\label{fig.main_plot}
\end{figure*}

\begin{table*}[t]
\setlength{\tabcolsep}{4 pt}
\footnotesize
\renewcommand{\arraystretch}{1.15} 
\centering
\begin{tabular}{lcccccccc}
\hline\hline
\multirow{2}{*}{Metric} & \multicolumn{2}{c}{\textbf{Global}}                & \multicolumn{2}{c}{\textbf{Qwen2.5-Instruct}}   & \multicolumn{2}{c}{\textbf{OPT}}                & \multicolumn{2}{c}{\textbf{LLaMA3}}             \\ \cline{2-9} 
                        & \multicolumn{1}{c}{Size} & Ground truth & \multicolumn{1}{c}{Size} & Ground truth & \multicolumn{1}{c}{Size} & Ground truth & \multicolumn{1}{c}{Size} & Ground truth \\ \hline
Compression (DE)        & \multicolumn{1}{c}{0.935} & 0.445      & \multicolumn{1}{c}{1.000} & 0.829      & \multicolumn{1}{c}{1.000} & 1.000      & \multicolumn{1}{c}{0.956} & 0.886      \\ 
\textbf{Compression (SE)}        & \multicolumn{1}{c}{0.430} & 0.917      & \multicolumn{1}{c}{0.486} & 0.714      & \multicolumn{1}{c}{0.829} & 0.829      & \multicolumn{1}{c}{0.598} & 0.657      \\ 
\textbf{Semantic CV}             & \multicolumn{1}{c}{0.805} & 0.926      & \multicolumn{1}{c}{0.829} & 1.000      & \multicolumn{1}{c}{1.000} & 1.000      & \multicolumn{1}{c}{0.956} & 0.943      \\ 
\textbf{Compression (PCS)}       & \multicolumn{1}{c}{0.708} & 0.965      & \multicolumn{1}{c}{0.829} & 1.000      & \multicolumn{1}{c}{1.000} & 1.000      & \multicolumn{1}{c}{0.956} & 0.943      \\ \hline\hline
\end{tabular}
\caption{The Spearman correlation coefficients within model groups (Qwen2.5-Instruct, OPT, and LLaMA3 families) and across all models (Global), including the correlations between the four metrics, and both model size (size) and comprehensive capabilities (Ground truth). The bold-highlighted components represent our refined metrics.}\label{tab.scale}
\end{table*}

\noindent\textbf{Spectral Entropy Quantification} Figure~\ref{fig.image2} illustrates that, from the perspective of eigenvalue distribution, the mechanism of compression hacking is that the secondary components causing anisotropy (\(\lambda_d\)) are homologous to the principal components of the compression part (\(-\log\lambda_d\)). Interesting, spectral entropy~\cite{roy2007effective} precisely models this characteristic, and it is formally equivalent to a compression metric weighted by eigenvalues (Compression (SE)), as formulated in Eq.~\ref{eq.se}.

\begin{equation}
    \mathcal{C}_{\text{SE}}(\mathbf{Z})\overset{\rm def}{=}-\operatorname{tr}(\Sigma_{\mathbf{Z}}\log\Sigma_{\mathbf{Z}})=-\sum_{d=1}^D\lambda_d\log \lambda_d\label{eq.se}
\end{equation}

\noindent\textbf{Semantic Coefficient of Variation} Just as compression-anisotropy synchronicity serves as a distinct manifestation of compression hacking, where compression is characterized by the mean of eigenvalue logarithms (reflecting the overall volume of the embedding space~\cite{li2025semantic}), while anisotropy corresponds to the ratio of extreme eigenvalues (quantifying the variation of semantic embeddings across different dimensions). Thus, we formulate their ratio as the Semantic Coefficient of Variation (Semantic CV) in Eq.~\ref{eq.cv}. This metric accurately characterizes the magnitude of anisotropy relative to information compression in the representation space \(\mathbf{Z}\).


\begin{equation}
\mathcal{CV}_{\text{Sem.}}(\mathbf{Z}) \overset{\rm def}{=} \frac{\mathcal{A}(\mathbf{Z})}{\mathcal{C}_{\text{DE}}(\mathbf{Z})}\label{eq.cv}
\end{equation}

\noindent\textbf{Manifold Correction Protocol} Numerous studies have proposed train-free ``anisotropy razors'' to reduce the anisotropy of representation space in a train-free manner, thereby enhancing representational capacity~\cite{mu2018all,su2021whitening}. This inspires us to decouple anisotropy from compression by selecting an appropriate anisotropy razor. Considering the exponential sharp decline in the eigenvalues of principal components corresponding to preceding dimensions due to compression hacking, we propose Principal Component Smoothing (PCS) as an anisotropy razor, inspired by the LW-shrinkage~\cite{ledoit2004well}. By setting a smoothing coefficient \(\beta \in [0,1]\) (default value is set to 0.9), we shift the representation space toward principal directions, resulting in a flatter transformed feature spectrum. This transformation is based on the covariance matrix of the representation and is achieved by defining the mapping \(\mathcal{T}_{\text{PCS}}\) as formulated in Eq.~\ref{eq.pcs}, thereby refining the compression metric Compression (PCS). In Theorem~\ref{theorem.pcs}, we prove that under sparse spectrum conditions, the PCS estimator exhibits higher statistical stability than the LW shrinkage.

\begin{equation}
    \mathcal{T}_{\text{PCS}}({\Sigma}_{\mathbf{Z}}) \overset{\rm def}{=} (1-\beta)\Sigma_{\mathbf{Z}}+\beta\max_{d=1}^D\lambda_d\mathbf{I}_D\label{eq.pcs}
\end{equation}

\subsection{Evaluation Pipeline}

In this section, we integrate the three refined metrics into a unified evaluation framework, which is a task-agnostic pipeline operating purely from a representational perspective. Our evaluation paradigm associates the sampled data batch \(\mathcal{B}\) with a decision score \(s = \mathcal{F}(\mathcal{B}, f_\text{LM})\). The decision function \(\mathcal{F}(\cdot)\) operates through two sequential processes: (1) the projection step extracts hidden representations \(\mathbf{Z}^{(\bm{p})} = \mathcal{F}_{\text{Projection}}(\bm{p}, f_{\text{LM}})\) for each data sample \(\bm{p} \in \mathcal{B}\); (2) the decision step computes the batch-level score \(s =\mathbb{E}_{\bm{p}\sim\mathcal{B}}\text{Metric}(\mathbf{Z}^{\bm{p}})\) based on the refined metrics. Notably, our dataset requirement specifies that the sample's in-context representation space should effectively estimate the model's complete in-context representation space given sufficient sampling, ensuring convergence of our proposed metrics. We discuss the impact of sampling size on metric convergence in Section~\ref{sec.eval_detail}.

\section{Experiments}

In this section, we employ meta-evaluation to investigate whether the refined metrics can achieve strong alignment with the comprehensive capabilities of LMs. This serves to validate whether incorporating the geometric distortion perspective of representations through compression hacking can enhance the informatics interpretation of LMs.

\subsection{Setup}\label{sec.setup}

\textbf{Models} Since our evaluation focuses on the internal structure of model representations, we evaluated 18 open-source language models from three different model families with varying sizes. These families are the LLaMA3 family~\cite{grattafiori2024llama} (LLaMA3.2-1B, LLaMA3.2-1B-Instruct, LLaMA3.2-3B, LLaMA3.2-3B-Instruct, LLaMA3.1-8B, LLaMA3.1-8B-Instruct), Qwen2.5-Instruct family~\cite{hui2024qwen2} (0.5B, 1.5B, 3B, 7B, 14B, 32B), OPT family~\cite{zhang2022opt} (0.125B, 1.3B, 2.7B, 6.7B, 13B, 30B). 

\noindent\textbf{Meta Evaluation} To evaluate the alignment between our metrics and LM capabilities, we employed meta-evaluation by calculating the Spearman correlation coefficient between human-annotated ground truth benchmarks and our proposed refined informatics metrics. For the meta-evaluation experiments, we selected six benchmark datasets spanning four major domains as ground truth, corresponding to four key dimensions of large language model capabilities: Factuality: TruthfulQA~\cite{lin2022truthfulqa}, FACTOR~\cite{muhlgay2024generating}, Math: MATH~\cite{hendrycks2021measuring}, Reasoning: CommonsenseQA~\cite{talmor2019commonsenseqa}, TheoremQA~\cite{chen2023theoremqa}, Knowledge: MMLU~\cite{hendrycks2020measuring}. We use the mean of all benchmark scores as the ground truth for the model's comprehensive evaluation (CE).

\begin{table*}[t]
\centering
\footnotesize
\setlength{\tabcolsep}{4 pt}
\renewcommand{\arraystretch}{1.15} 
\begin{tabular}{lccccccccc}
\hline\hline
\multicolumn{1}{c}{\multirow{2}{*}{\textbf{Metric}}} & \multicolumn{2}{c}{\textbf{Property}}                  & \multicolumn{2}{c}{\textbf{Factuality}}                      & \multicolumn{2}{c}{\textbf{Reasoning}}                         & \multicolumn{1}{l}{\textbf{Math}} & \multicolumn{1}{l}{\textbf{Knowledge}} & \multirow{2}{*}{\textbf{CE}} \\ \cline{2-9}
\multicolumn{1}{c}{}                                 & \multicolumn{1}{c}{Info.} & \multicolumn{1}{c}{Geom.} & \multicolumn{1}{c}{TruthfulQA} & \multicolumn{1}{c}{FACTOR} & \multicolumn{1}{l}{Common.QA} & \multicolumn{1}{l}{Theo.QA} & \multicolumn{1}{c}{MATH}          & \multicolumn{1}{c}{MMLU}                   &                              \\ \hline
\multicolumn{1}{l}{Semantic Volume}                  & \multicolumn{1}{c}{\checkmark}  & \multicolumn{1}{c}{}      & \multicolumn{1}{c}{0.429}      & \multicolumn{1}{c}{0.414}  & \multicolumn{1}{c}{0.441}     & \multicolumn{1}{c}{0.483}     & \multicolumn{1}{c}{0.420}         & \multicolumn{1}{c}{0.409}                  & 0.442                        \\ 
\multicolumn{1}{l}{Curvature}                        & \multicolumn{1}{c}{}      & \multicolumn{1}{c}{\checkmark}  & \multicolumn{1}{c}{0.355}      & \multicolumn{1}{c}{0.372}  & \multicolumn{1}{c}{0.342}     & \multicolumn{1}{c}{0.365}     & \multicolumn{1}{c}{0.303}         & \multicolumn{1}{c}{0.309}                  & 0.302                        \\ 
\multicolumn{1}{l}{Diff-eRank}                       & \multicolumn{1}{c}{\checkmark}  & \multicolumn{1}{c}{\checkmark}  & \multicolumn{1}{c}{0.476}      & \multicolumn{1}{c}{0.461}  & \multicolumn{1}{c}{0.494}     & \multicolumn{1}{c}{0.521}     & \multicolumn{1}{c}{0.424}         & \multicolumn{1}{c}{0.452}                  & 0.492                        \\ 
\multicolumn{1}{l}{Compression(DE)}                  & \multicolumn{1}{c}{\checkmark}  & \multicolumn{1}{c}{}      & \multicolumn{1}{c}{0.458}      & \multicolumn{1}{c}{0.488}  & \multicolumn{1}{c}{0.481}     & \multicolumn{1}{c}{0.471}     & \multicolumn{1}{c}{0.490}         & \multicolumn{1}{c}{0.471}                  & 0.482                        \\ 
\multicolumn{1}{l}{Anisotropy}                       & \multicolumn{1}{c}{}      & \multicolumn{1}{c}{\checkmark}  & \multicolumn{1}{c}{0.715}      & \multicolumn{1}{c}{0.702}  & \multicolumn{1}{c}{0.702}     & \multicolumn{1}{c}{0.792}     & \multicolumn{1}{c}{0.673}         & \multicolumn{1}{c}{0.709}                  & 0.701                        \\ \hline
\multicolumn{1}{l}{\textbf{Compression (SE)}}                 & \multicolumn{1}{c}{\checkmark}  & \multicolumn{1}{c}{\checkmark}  & \multicolumn{1}{c}{0.895}      & \multicolumn{1}{c}{0.861}  & \multicolumn{1}{c}{0.892}     & \multicolumn{1}{c}{0.921}     & \multicolumn{1}{c}{0.824}         & \multicolumn{1}{c}{0.852}                  & \textbf{0.912}                        \\ 
\multicolumn{1}{l}{\textbf{Semantic CV}}                      & \multicolumn{1}{c}{\checkmark}  & \multicolumn{1}{c}{\checkmark}  & \multicolumn{1}{c}{0.946}      & \multicolumn{1}{c}{0.905}  & \multicolumn{1}{c}{0.916}     & \multicolumn{1}{c}{0.926}     & \multicolumn{1}{c}{0.857}         & \multicolumn{1}{c}{0.917}                  & \textbf{0.926}                        \\ 
\multicolumn{10}{l}{\textbf{Compression (DE)}}                                                                                                                                                                                                                                                                                                                         \\ 
\multicolumn{1}{l}{w/ Remove Directions}             & \multicolumn{1}{c}{\checkmark}      & \multicolumn{1}{c}{\checkmark}      & \multicolumn{1}{c}{0.053}      & \multicolumn{1}{c}{0.102}  & \multicolumn{1}{c}{0.042}     & \multicolumn{1}{c}{0.142}     & \multicolumn{1}{c}{0.211}         & \multicolumn{1}{c}{0.093}                  & 0.110                        \\ 
\multicolumn{1}{l}{w/ Whitening}                     & \multicolumn{1}{c}{\checkmark}      & \multicolumn{1}{c}{\checkmark}      & \multicolumn{1}{c}{0.487}      & \multicolumn{1}{c}{0.498}  & \multicolumn{1}{c}{0.502}     & \multicolumn{1}{c}{0.482}     & \multicolumn{1}{c}{0.423}         & \multicolumn{1}{c}{0.456}                  & 0.472                        \\ 
\multicolumn{1}{l}{w/ LW Shrinkage}                  & \multicolumn{1}{c}{\checkmark}      & \multicolumn{1}{c}{\checkmark}      & \multicolumn{1}{c}{0.458}      & \multicolumn{1}{c}{0.488}  & \multicolumn{1}{c}{0.481}     & \multicolumn{1}{c}{0.471}     & \multicolumn{1}{c}{0.490}         & \multicolumn{1}{c}{0.471}                  & 0.482                        \\ 
\multicolumn{1}{l}{\textbf{w/ PCS}}                           & \multicolumn{1}{c}{\checkmark}  & \multicolumn{1}{c}{\checkmark}  & \multicolumn{1}{c}{0.962}      & \multicolumn{1}{c}{0.955}  & \multicolumn{1}{c}{0.923}     & \multicolumn{1}{c}{0.967}     & \multicolumn{1}{c}{0.846}         & \multicolumn{1}{c}{0.923}                  & \textbf{0.965}                        \\ \hline\hline
\end{tabular}
\caption{The Spearman correlation coefficient between the metrics based on the representation properties and the ground truth benchmark, where gray-highlighted components represent refined metrics we proposed.}\label{tab.main}
\end{table*}

\noindent\textbf{Baseline Metrics}
We selected purely representation-based baseline metrics that operate independently of ground-truth labels and model sampling, encompassing both informatics and geometric perspectives. The informatics metrics include Compression (DE) and Semantic Volume~\cite{li2025semantic}, while the geometric metrics consist of Curvature~\cite{hosseini2023large} quantifying manifold curvature characteristics, and anisotropy. Diff-eRank~\cite{wei2024diff} is the metric that simultaneously models both information compression and geometric structure in language model representations, yet neglecting their direct synergistic relationship.

\noindent\textbf{Baseline Anisotropy Razors} In addition to PCS as the anisotropy razor for decoupling anisotropy from compression, we selected three anisotropy razors as baselines. Remove Directions~\cite{mu2018all} is a post-processing method for eliminating noisy directions. Whitening~\cite{su2021whitening} eliminates correlations between features through global scaling, normalizing the eigenvalues to have the same mean and variance. LW Shrinkage~\cite{ledoit2004well}, on the other hand, adjusts extreme eigenvalues linearly towards the mean via Bayesian shrinkage.

\subsection{Main Results}

\begin{figure*}[t]
\centering
\includegraphics[width=0.95\textwidth]{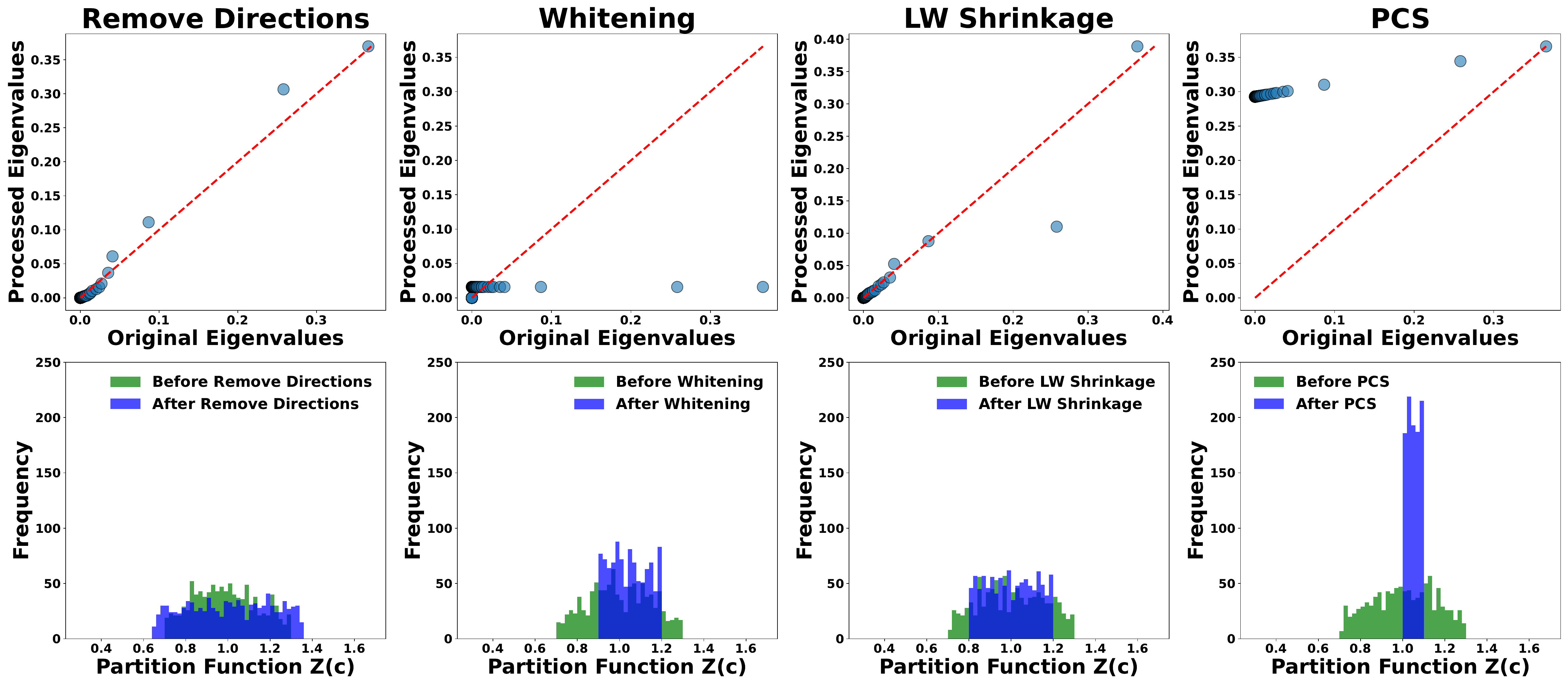}
\caption{The qqplot of the eigenvalue distribution before and after using different anisotropy razors, and the distribution of the partition function \(\mathcal{Z}(\mathbf{c})\).}\label{fig.qqplot}
\end{figure*}

Figure~\ref{fig.main_plot} and Table~\ref{tab.scale} present the regression equations and Spearman correlations among the original compression metric (compression (DE)), our three proposed refined metrics, and comprehensive capabilities as ground truth. The original compression (DE) exhibits strong correlations of 0.935 with model size across all models, reaching 1.000, 1.000, and 0.956 within model families, confirming the high consistency between original compression capability and model scale in language models. However, this metric achieves only 0.445 correlation with comprehensive capabilities in cross-architecture global analysis, maintaining higher correlations (0.829, 1.000, 0.886) only within model families, suggesting model size' applicability for LM capability assessment is confined to homogeneous architectural systems.

Among our refined metrics, compression (SE) shows reduced size correlation (0.430 globally) but achieves 0.917 cross-architecture capability correlation, demonstrating its effectiveness in capturing capability differences across diverse architectures. Both semantic CV and compression (PCS) maintain dual high correlations with size and capabilities within model families while sustaining stable cross-architecture capability correlations (0.926 and 0.965, respectively), with size correlations moderately decreasing to 0.805 and 0.708. This demonstrates that our refined metrics achieve significantly stronger alignment with LMs' comprehensive capabilities compared to the original compression metrics. Through compression hacking, we substantially enhance the informatics interpretation of LMs from the geometric distortion perspective of representations, thereby extending the ``compression as intelligence'' concept.

\subsection{Comparison with Baseline Metrics}

Table~\ref{tab.main} systematically presents the Spearman correlation coefficients between the ground truth benchmarks, and both the baseline metrics based on internal representations and our refined metrics. The property column identifies whether the metric describes informatics (Info.) or geometric (Geom.) property. Notably, metrics that model only a single property (either informational or geometric property), such as semantic volume, curvature, compression (DE), anisotropy, and their modified versions (w/ remove directions), all exhibit correlation coefficients with the comprehensive score below 0.5. Although Diff-eRank incorporates spectral entropy characteristics, its results still fail to reflect comprehensive capabilities, possibly because this metric focuses on the noise reduction process of knowledge acquisition while neglecting the synergy between information and geometric properties. Experiments show that the compression methods modified by whitening and LW shrinkage, although aiming to decouple anisotropic features, still do not significantly improve capability alignment. It is noteworthy that our refined metrics in Figure~\ref{fig.main_plot} demonstrate significant advantages over the baseline metrics.

\subsection{Effect of Anisotropy Razors}

Table~\ref{tab.main} reveals that as ``anisotropy razor'' methods, remove directions, whitening, and LW shrinkage all fail to effectively improve the reflection of comprehensive capabilities, whereas PCS exhibits a significant improvement. In this section, we investigate the structural changes in representations before and after processing with these anisotropy razors, conducting an in-depth mechanistic analysis of PCS's advantages over other methods.  

The qqplot in Figure~\ref{fig.qqplot} illustrates the eigenvalue distributions before and after applying these four razors. The first three methods decouple anisotropy while maintaining the linear geometric structure of the data, resulting in eigenvalues that still exhibit distinct partitioning. In contrast, PCS upscales the low-eigenvalue region, ensuring that the corrected compression relies entirely on the contributions of the principal components.  
The formal method for anisotropy detection involves examining the ``self-normalization'' property (i.e., \(\mathcal{Z} (\mathbf{c})\) tending toward a constant, independent of \(\mathbf{c}\))~\cite{mu2018all}. Figure~\ref{fig.qqplot} illustrates the distribution of \(\mathcal{Z} (\mathbf{c})\) before and after applying different anisotropy razors. We observe that remove directions leads to a more dispersed \(\mathcal{Z} (\mathbf{c})\) distribution, increasing anisotropy. This occurs because truncating certain directions causes the remaining ones to spread more extremely. In contrast, whitening, LW shrinkage, and PCS concentrate the \(\mathcal{Z} (\mathbf{c})\) distribution. Notably, PCS achieves more pronounced anisotropy elimination than the other methods by rigidly correcting the eigenvalue distribution.

\section{Conclusion}

We introduce a notable characteristic in language models termed ``compression hacking'', where the noisy directions in LM representations feign high compression rates by sacrificing spatial uniformity, thereby distorting information compression metrics. Through spectral entropy quantification, semantic coefficient of variation, and a manifold correction protocol based on principal component smoothing, we refine the compression measurement framework. Extensive experiments on 18 mainstream language models demonstrate that the refined metrics achieve strong alignment with models' actual capabilities. These results prove that incorporating the geometric distortion perspective through compression hacking significantly enhances the informatics interpretation of LMs.

\section{Limitations}

In fact, the metrics we propose still have broader application scenarios worth exploring. For instance, practical techniques such as pruning, quantization, and distillation could potentially benefit from these indicators that reveal internal redundancies. Our proposed metrics help better identify compressible components in models without causing significant information loss. We anticipate that these refined metrics may open new avenues for future research, exploring how such internal representation indicators can be applied to various potential scenarios.



\bibliography{cv}

@inproceedings{cheninside,
  title={INSIDE: LLMs' Internal States Retain the Power of Hallucination Detection},
  author={Chen, Chao and Liu, Kai and Chen, Ze and Gu, Yi and Wu, Yue and Tao, Mingyuan and Fu, Zhihang and Ye, Jieping},
  booktitle={The Twelfth International Conference on Learning Representations},
  year={2023}
}

@inproceedings{gaorepresentation,
  title={Representation Degeneration Problem in Training Natural Language Generation Models},
  author={Gao, Jun and He, Di and Tan, Xu and Qin, Tao and Wang, Liwei and Liu, Tieyan},
  booktitle={International Conference on Learning Representations},
  year={2019}
}

@inproceedings{cai2021isotropy,
  title={Isotropy in the contextual embedding space: Clusters and manifolds},
  author={Cai, Xingyu and Huang, Jiaji and Bian, Yuchen and Church, Kenneth},
  booktitle={International conference on learning representations},
  year={2019}
}

@inproceedings{lin2022truthfulqa,
  title={TruthfulQA: Measuring How Models Mimic Human Falsehoods},
  author={Lin, Stephanie and Hilton, Jacob and Evans, Owain},
  booktitle={Proceedings of the 60th Annual Meeting of the Association for Computational Linguistics (Volume 1: Long Papers)},
  pages={3214--3252},
  year={2022}
}

@misc{sutskever2023compressors,
  author = {Ilya Sutskever},
  title = {Stronger compressors find more shared structure},
  year = {2023},
  howpublished = {The Ilya’s Talk},
  note = {Talk},
}

@inproceedings{mimno2017strange,
  title={The strange geometry of skip-gram with negative sampling},
  author={Mimno, David and Thompson, Laure},
  booktitle={Proceedings of the 2017 Conference on Empirical Methods in Natural Language Processing},
  pages={2873--2878},
  year={2017}
}

@inproceedings{demeter2020stolen,
  title={Stolen Probability: A Structural Weakness of Neural Language Models},
  author={Demeter, David and Kimmel, Gregory and Downey, Doug},
  booktitle={Proceedings of the 58th Annual Meeting of the Association for Computational Linguistics},
  pages={2191--2197},
  year={2020}
}

@inproceedings{wei2024diff,
  title={Diff-eRank: A Novel Rank-Based Metric for Evaluating Large Language Models},
  author={Wei, Lai and Tan, Zhiquan and Li, Chenghai and Wang, Jindong and Huang, Weiran},
  booktitle={The Thirty-eighth Annual Conference on Neural Information Processing Systems},
  year={2024}
}

@inproceedings{ethayarajh2019contextual,
  title={How Contextual are Contextualized Word Representations? Comparing the Geometry of BERT, ELMo, and GPT-2 Embeddings},
  author={Ethayarajh, Kawin},
  booktitle={Proceedings of the 2019 Conference on Empirical Methods in Natural Language Processing and the 9th International Joint Conference on Natural Language Processing (EMNLP-IJCNLP)},
  year={2019},
  organization={Association for Computational Linguistics}
}

@inproceedings{rudman2022isoscore,
  title={IsoScore: Measuring the Uniformity of Embedding Space Utilization},
  author={Rudman, William and Gillman, Nate and Rayne, Taylor and Eickhoff, Carsten},
  booktitle={Findings of the Association for Computational Linguistics: ACL 2022},
  pages={3325--3339},
  year={2022}
}

@inproceedings{mu2018all,
  title={All-but-the-Top: Simple and Effective Postprocessing for Word Representations},
  author={Mu, Jiaqi and Viswanath, Pramod},
  booktitle={International Conference on Learning Representations},
  year={2018}
}

@article{arora2016latent,
  title={A Latent Variable Model Approach to PMI-based Word Embeddings},
  author={Arora, Sanjeev and Li, Yuanzhi and Liang, Yingyu and Ma, Tengyu and Risteski, Andrej},
  journal={Transactions of the Association for Computational Linguistics},
  volume={4},
  pages={385--399},
  year={2016},
  publisher={MIT Press-Journals}
}

@article{zhouyin2023understanding,
  title={Understanding Neural Networks withLogarithm Determinant Entropy Estimator},
  author={Zhouyin, Zhanghao and Liu, Ding},
  year={2023}
}

@inproceedings{sellam2020bleurt,
  title={BLEURT: Learning Robust Metrics for Text Generation},
  author={Sellam, Thibault and Das, Dipanjan and Parikh, Ankur},
  booktitle={Proceedings of the 58th Annual Meeting of the Association for Computational Linguistics},
  pages={7881--7892},
  year={2020}
}

@inproceedings{lin2004rouge,
  title={Rouge: A package for automatic evaluation of summaries},
  author={Lin, Chin-Yew},
  booktitle={Text summarization branches out},
  pages={74--81},
  year={2004}
}

@misc{conover2023free,
  title={Free dolly: Introducing the world’s first truly open instruction-tuned llm},
  author={Conover, Mike and Hayes, Matt and Mathur, Ankit and Xie, Jianwei and Wan, Jun and Shah, Sam and Ghodsi, Ali and Wendell, Patrick and Zaharia, Matei and Xin, Reynold},
  year={2023}
}

@misc{wikimedia_downloads,
  author       = {Wikimedia Foundation},
  title        = {Wikimedia Downloads},
  year         = {2025},
  url          = {https://dumps.wikimedia.org/},
  note         = {Accessed: 2025-03-02}
}

@article{su2021whitening,
  title={Whitening sentence representations for better semantics and faster retrieval},
  author={Su, Jianlin and Cao, Jiarun and Liu, Weijie and Ou, Yangyiwen},
  journal={arXiv preprint arXiv:2103.15316},
  year={2021}
}

@article{zhang2022opt,
  title={Opt: Open pre-trained transformer language models},
  author={Zhang, Susan and Roller, Stephen and Goyal, Naman and Artetxe, Mikel and Chen, Moya and Chen, Shuohui and Dewan, Christopher and Diab, Mona and Li, Xian and Lin, Xi Victoria and others},
  journal={arXiv preprint arXiv:2205.01068},
  year={2022}
}

@article{hui2024qwen2,
  title={Qwen2. 5-coder technical report},
  author={Hui, Binyuan and Yang, Jian and Cui, Zeyu and Yang, Jiaxi and Liu, Dayiheng and Zhang, Lei and Liu, Tianyu and Zhang, Jiajun and Yu, Bowen and Lu, Keming and others},
  journal={arXiv preprint arXiv:2409.12186},
  year={2024}
}

@article{grattafiori2024llama,
  title={The llama 3 herd of models},
  author={Grattafiori, Aaron and Dubey, Abhimanyu and Jauhri, Abhinav and Pandey, Abhinav and Kadian, Abhishek and Al-Dahle, Ahmad and Letman, Aiesha and Mathur, Akhil and Schelten, Alan and Vaughan, Alex and others},
  journal={arXiv e-prints},
  pages={arXiv--2407},
  year={2024}
}

@inproceedings{muhlgay2024generating,
  title={Generating Benchmarks for Factuality Evaluation of Language Models},
  author={Muhlgay, Dor and Ram, Ori and Magar, Inbal and Levine, Yoav and Ratner, Nir and Belinkov, Yonatan and Abend, Omri and Leyton-Brown, Kevin and Shashua, Amnon and Shoham, Yoav},
  booktitle={Proceedings of the 18th Conference of the European Chapter of the Association for Computational Linguistics (Volume 1: Long Papers)},
  pages={49--66},
  year={2024}
}

@article{ledoit2004well,
  title={A well-conditioned estimator for large-dimensional covariance matrices},
  author={Ledoit, Olivier and Wolf, Michael},
  journal={Journal of multivariate analysis},
  volume={88},
  number={2},
  pages={365--411},
  year={2004},
  publisher={Elsevier}
}

@article{li2025semantic,
  title={Semantic Volume: Quantifying and Detecting both External and Internal Uncertainty in LLMs},
  author={Li, Xiaomin and Yu, Zhou and Zhang, Ziji and Zhuang, Yingying and Shah, Swair and Beniwal, Anurag},
  journal={arXiv preprint arXiv:2502.21239},
  year={2025}
}

@article{hosseini2023large,
  title={Large language models implicitly learn to straighten neural sentence trajectories to construct a predictive representation of natural language.},
  author={Hosseini, Eghbal and Fedorenko, Evelina},
  journal={Advances in Neural Information Processing Systems},
  volume={36},
  pages={43918--43930},
  year={2023}
}

@inproceedings{deletanglanguage,
  title={Language Modeling Is Compression},
  author={Deletang, Gregoire and Ruoss, Anian and Duquenne, Paul-Ambroise and Catt, Elliot and Genewein, Tim and Mattern, Christopher and Grau-Moya, Jordi and Wenliang, Li Kevin and Aitchison, Matthew and Orseau, Laurent and others},
  booktitle={The Twelfth International Conference on Learning Representations},
  year={2023}

}

@inproceedings{zhang2022unsupervised,
  title={Unsupervised sentence representation via contrastive learning with mixing negatives},
  author={Zhang, Yanzhao and Zhang, Richong and Mensah, Samuel and Liu, Xudong and Mao, Yongyi},
  booktitle={Proceedings of the AAAI Conference on Artificial Intelligence},
  volume={36},
  number={10},
  pages={11730--11738},
  year={2022}
}

@inproceedings{jiang2022promptbert,
  title={PromptBERT: Improving BERT Sentence Embeddings with Prompts},
  author={Jiang, Ting and Jiao, Jian and Huang, Shaohan and Zhang, Zihan and Wang, Deqing and Zhuang, Fuzhen and Wei, Furu and Huang, Haizhen and Deng, Denvy and Zhang, Qi},
  booktitle={Proceedings of the 2022 Conference on Empirical Methods in Natural Language Processing},
  pages={8826--8837},
  year={2022}
}

@inproceedings{gao2021simcse,
  title={SimCSE: Simple Contrastive Learning of Sentence Embeddings},
  author={Gao, Tianyu and Yao, Xingcheng and Chen, Danqi},
  booktitle={Proceedings of the 2021 Conference on Empirical Methods in Natural Language Processing},
  pages={6894},
  year={2021},
  organization={Association for Computational Linguistics}
}

@article{hendrycks2021measuring,
  title={Measuring mathematical problem solving with the math dataset},
  author={Hendrycks, Dan and Burns, Collin and Kadavath, Saurav and Arora, Akul and Basart, Steven and Tang, Eric and Song, Dawn and Steinhardt, Jacob},
  journal={arXiv preprint arXiv:2103.03874},
  year={2021}
}

@inproceedings{chen2023theoremqa,
  title={TheoremQA: A Theorem-driven Question Answering Dataset},
  author={Chen, Wenhu and Yin, Ming and Ku, Max and Lu, Pan and Wan, Yixin and Ma, Xueguang and Xu, Jianyu and Wang, Xinyi and Xia, Tony},
  booktitle={Proceedings of the 2023 Conference on Empirical Methods in Natural Language Processing},
  year={2023},
  organization={Association for Computational Linguistics}
}

@article{hendrycks2020measuring,
  title={Measuring massive multitask language understanding},
  author={Hendrycks, Dan and Burns, Collin and Basart, Steven and Zou, Andy and Mazeika, Mantas and Song, Dawn and Steinhardt, Jacob},
  journal={arXiv preprint arXiv:2009.03300},
  year={2020}
}

@inproceedings{talmor2019commonsenseqa,
  title={CommonsenseQA: A Question Answering Challenge Targeting Commonsense Knowledge},
  author={Talmor, Alon and Herzig, Jonathan and Lourie, Nicholas and Berant, Jonathan},
  booktitle={Proceedings of the 2019 Conference of the North American Chapter of the Association for Computational Linguistics: Human Language Technologies, Volume 1 (Long and Short Papers)},
  pages={4149--4158},
  year={2019}
}

@article{deletang2023language,
  title={Language modeling is compression},
  author={Del{\'e}tang, Gr{\'e}goire and Ruoss, Anian and Duquenne, Paul-Ambroise and Catt, Elliot and Genewein, Tim and Mattern, Christopher and Grau-Moya, Jordi and Wenliang, Li Kevin and Aitchison, Matthew and Orseau, Laurent and others},
  journal={arXiv preprint arXiv:2309.10668},
  year={2023}
}

@inproceedings{yu2022rare,
  title={Rare Tokens Degenerate All Tokens: Improving Neural Text Generation via Adaptive Gradient Gating for Rare Token Embeddings},
  author={Yu, Sangwon and Song, Jongyoon and Kim, Heeseung and Lee, Seongmin and Ryu, Woo-Jong and Yoon, Sungroh},
  booktitle={Proceedings of the 60th Annual Meeting of the Association for Computational Linguistics (Volume 1: Long Papers)},
  pages={29--45},
  year={2022}
}

@inproceedings{rudman2024stable,
  title={Stable Anisotropic Regularization},
  author={Rudman, William and Eickhoff, Carsten},
  booktitle={ICLR},
  year={2024}
}

@article{chen2025information,
  title={Information compression in the AI era: Recent advances and future challenges},
  author={Chen, Jun and Fang, Yong and Khisti, Ashish and {\"O}zg{\"u}r, Ayfer and Shlezinger, Nir},
  journal={IEEE Journal on Selected Areas in Communications},
  year={2025},
  publisher={IEEE}
}

@article{celikyilmaz2020evaluation,
  title={Evaluation of text generation: A survey},
  author={Celikyilmaz, Asli and Clark, Elizabeth and Gao, Jianfeng},
  journal={arXiv preprint arXiv:2006.14799},
  year={2020}
}

@article{tan2024can,
  title={Can i understand what i create? self-knowledge evaluation of large language models},
  author={Tan, Zhiquan and Wei, Lai and Wang, Jindong and Xie, Xing and Huang, Weiran},
  journal={arXiv preprint arXiv:2406.06140},
  year={2024}
}

@article{zheng2023judging,
  title={Judging llm-as-a-judge with mt-bench and chatbot arena},
  author={Zheng, Lianmin and Chiang, Wei-Lin and Sheng, Ying and Zhuang, Siyuan and Wu, Zhanghao and Zhuang, Yonghao and Lin, Zi and Li, Zhuohan and Li, Dacheng and Xing, Eric and others},
  journal={Advances in Neural Information Processing Systems},
  volume={36},
  pages={46595--46623},
  year={2023}
}

@article{sasaki2007truth,
  title={The truth of the F-measure},
  author={Sasaki, Yutaka and others},
  journal={Teach tutor mater},
  volume={1},
  number={5},
  pages={1--5},
  year={2007}
}

@article{mcknight2010mann,
  title={Mann-Whitney U Test},
  author={McKnight, Patrick E and Najab, Julius},
  journal={The Corsini encyclopedia of psychology},
  pages={1--1},
  year={2010},
  publisher={Wiley Online Library}
}

@inproceedings{pichler2022differential,
  title={A differential entropy estimator for training neural networks},
  author={Pichler, Georg and Colombo, Pierre Jean A and Boudiaf, Malik and Koliander, G{\"u}nther and Piantanida, Pablo},
  booktitle={International Conference on Machine Learning},
  pages={17691--17715},
  year={2022},
  organization={PMLR}
}

@article{gao2019representation,
  title={Representation degeneration problem in training natural language generation models},
  author={Gao, Jun and He, Di and Tan, Xu and Qin, Tao and Wang, Liwei and Liu, Tie-Yan},
  journal={arXiv preprint arXiv:1907.12009},
  year={2019}
}

@article{skean2025layer,
  title={Layer by Layer: Uncovering Hidden Representations in Language Models},
  author={Skean, Oscar and Arefin, Md Rifat and Zhao, Dan and Patel, Niket and Naghiyev, Jalal and LeCun, Yann and Shwartz-Ziv, Ravid},
  journal={arXiv preprint arXiv:2502.02013},
  year={2025}
}

@article{wang2024latent,
  title={Latent Space Chain-of-Embedding Enables Output-free LLM Self-Evaluation},
  author={Wang, Yiming and Zhang, Pei and Yang, Baosong and Wong, Derek F and Wang, Rui},
  journal={arXiv preprint arXiv:2410.13640},
  year={2024}
}

@article{wang2024embedding,
  title={Embedding trajectory for out-of-distribution detection in mathematical reasoning},
  author={Wang, Yiming and Zhang, Pei and Yang, Baosong and Wong, Derek and Zhang, Zhuosheng and Wang, Rui},
  journal={Advances in Neural Information Processing Systems},
  volume={37},
  pages={42965--42999},
  year={2024}
}

@article{brown2020language,
  title={Language models are few-shot learners},
  author={Brown, Tom and Mann, Benjamin and Ryder, Nick and Subbiah, Melanie and Kaplan, Jared D and Dhariwal, Prafulla and Neelakantan, Arvind and Shyam, Pranav and Sastry, Girish and Askell, Amanda and others},
  journal={Advances in neural information processing systems},
  volume={33},
  pages={1877--1901},
  year={2020}
}

@inproceedings{roy2007effective,
  title={The effective rank: A measure of effective dimensionality},
  author={Roy, Olivier and Vetterli, Martin},
  booktitle={2007 15th European signal processing conference},
  pages={606--610},
  year={2007},
  organization={IEEE}
}

@article{zang2024modeling,
  title={Modeling Selective Feature Attention for Representation-based Siamese Text Matching},
  author={Zang, Jianxiang and Liu, Hui},
  journal={arXiv preprint arXiv:2404.16776},
  year={2024}
}

@article{dou2024multi,
  title={Multi-Programming Language Sandbox for LLMs},
  author={Dou, Shihan and Zhang, Jiazheng and Zang, Jianxiang and Tao, Yunbo and Zhou, Weikang and Jia, Haoxiang and Liu, Shichun and Yang, Yuming and Xi, Zhiheng and Wu, Shenxi and others},
  journal={arXiv preprint arXiv:2410.23074},
  year={2024}
}

@inproceedings{zang2024explanation,
  title={Explanation based bias decoupling regularization for natural language inference},
  author={Zang, Jianxiang and Liu, Hui},
  booktitle={2024 International Joint Conference on Neural Networks (IJCNN)},
  pages={1--8},
  year={2024},
  organization={IEEE}
}

@inproceedings{zang2023extract,
  title={How to extract and interact? nested siamese text matching with interaction and extraction},
  author={Zang, Jianxiang and Liu, Hui},
  booktitle={International Conference on Artificial Neural Networks},
  pages={523--535},
  year={2023},
  organization={Springer}
}

@incollection{zang2023improving,
  title={Improving text semantic similarity modeling through a 3d siamese network},
  author={Zang, Jianxiang and Liu, Hui},
  booktitle={ECAI 2023},
  pages={2970--2977},
  year={2023},
  publisher={IOS Press}
}
\appendix

\section{Related Work and Further Analysis}

\subsection{Evaluation of Language Models}

\begin{figure*}[t]
\centering
\includegraphics[width=1\textwidth]{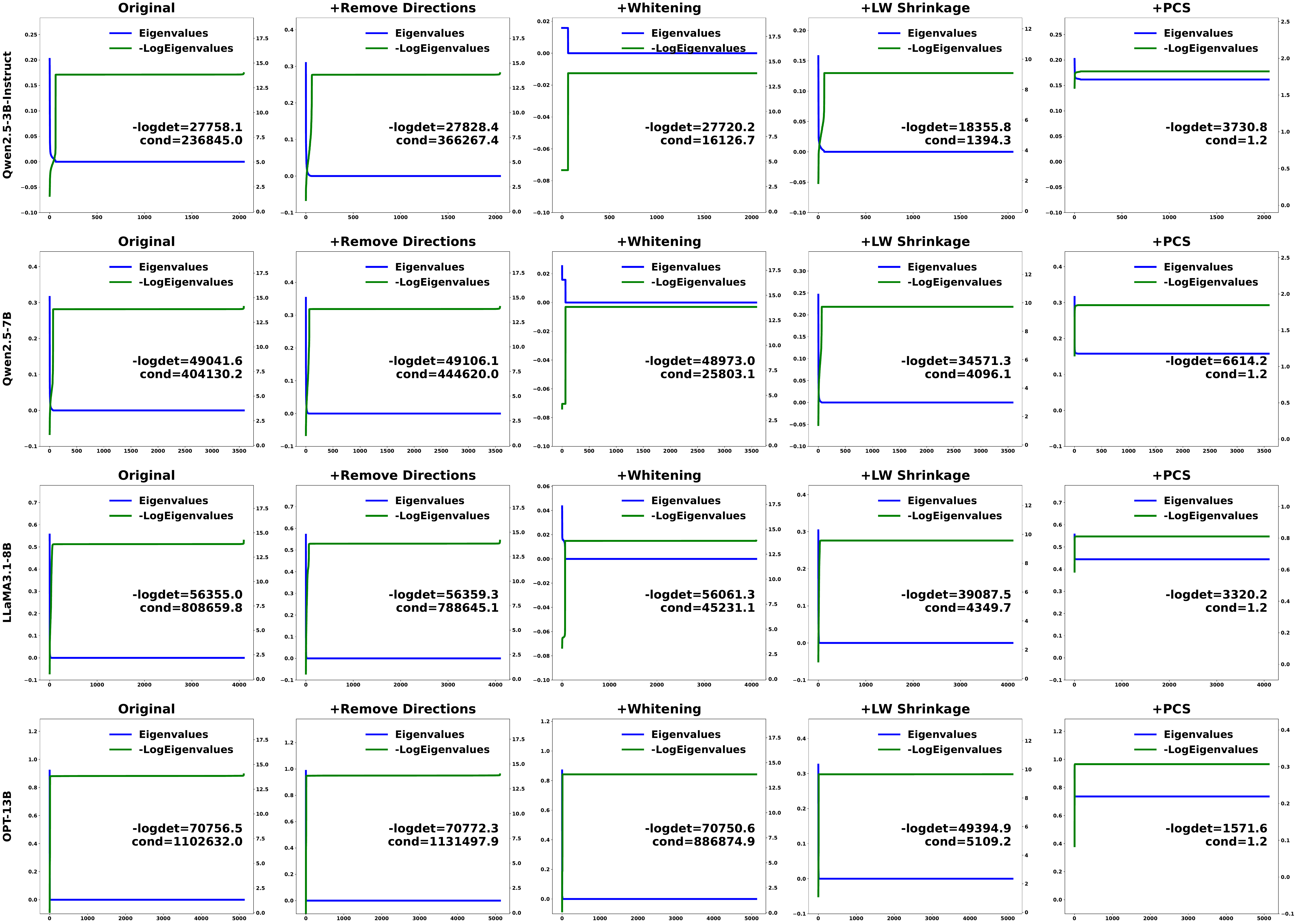}
\caption{The eigenvalues and their negative logarithmic distributions of different models' representations before and after processing with different anisotropy razors.}\label{fig.multi_model_eigen}
\end{figure*}

The evaluation of language models is currently in a state of rapid iterative development, encompassing a variety of tasks, datasets, and benchmarks~\cite{celikyilmaz2020evaluation,zheng2023judging,tan2024can}. Traditional evaluation metrics such as accuracy, F1-score~\cite{sasaki2007truth}, BLEU~\cite{sellam2020bleurt}, and ROUGE~\cite{lin2004rouge} focus on comparing model predictions with annotated labels in downstream tasks. Other metrics like perplexity and cross-entropy loss do not rely on annotated labels and are computed solely based on input text. However, these methods primarily emphasize external evaluation based on model predictions.  

Recently, significant efforts have been devoted to exploring the mechanisms by which language models (LMs) process information internally, driving the development of LM self-evaluation~\cite{wei2024diff,wang2024embedding,wang2024latent,zang2024explanation,dou2024multi} independent of specific tasks and model outputs.
The concept of ``compression as intelligence'' has provided an information-theoretic internal evaluation perspective for language models, highlighting that the acquisition of world knowledge by language models is a denoising process~\cite{sutskever2023compressors,deletanglanguage,wei2024diff,chen2025information}. Differential entropy of representations, as a classical information-theoretic measure, effectively quantifies the internal uncertainty of language models~\cite{cheninside,zhouyin2023understanding}. Semantic volume~\cite{li2025semantic} leverages representation-level differential entropy-aware compression metrics to offer a novel perspective for language model evaluation. However, related work has found that such compression can only model the scale of language models and fails to align with their capabilities~\cite{zang2023improving,wei2024diff,li2025semantic}.  

We introduce the concept of compression hacking in language model representations, where the noisy directions of LM representations sacrifice spatial uniformity to feign high compression rates. This implies that we can refine the information compression perspective by considering the geometric distortions in the language model's representation space.




\subsection{Anisotropy of Language Models}

\textbf{Anisotropy} The anisotropy of language models reflects the geometric properties of the contextual embedding space. Related studies have observed that during sampling, the spatial embeddings of negative samples exhibit anisotropy, which describes how vectors are distributed within the contextual space~\cite{mimno2017strange,ethayarajh2019contextual}. The researchers found that most vectors occupy a relatively narrow cone within the space, and that vectors within this cone tend to have high cosine similarity~\cite{gaorepresentation}. Demeter pointed out that using softmax introduces structural weaknesses in the representation space, leading to bias, a common issue in language models~\cite{demeter2020stolen}.  
To better quantify the anisotropy of LMs, related work has identified isolated clusters and low-dimensional manifolds in the contextual embedding space, introducing tools for their qualitative and quantitative analysis~\cite{ethayarajh2019contextual,cai2021isotropy,rudman2022isoscore}. 
However, these tools are mainly based on similarity calculations of embedded representations. What is needed instead is an anisotropy metric that can establish a connection with entropy-based compression metric.

\noindent\textbf{Anisotropy Razors} To mitigate anisotropy in language models, existing research has proposed various solutions. Contrastive learning has emerged as a powerful tool for obtaining effective sentence representations, effectively reducing anisotropy by increasing the spatial distance between positive and negative samples~\cite{gao2021simcse,zhang2022unsupervised,jiang2022promptbert,zang2023extract,zang2024modeling}. In this work, we employ post-processing methods applied directly to the representation space as baseline approaches for the anisotropy razor:

\begin{itemize}
    \item \textbf{Remove Directions}~\cite{mu2018all}: First, subtract the common mean vector of all word vectors to eliminate global bias; then remove the top high-variance principal component directions via Principal Component Analysis (PCA). This process enhances semantic feature discriminability by eliminating non-semantic common information from word vectors, making the word space distribution more isotropic.
    \item \textbf{Whitening}~\cite{su2021whitening}: Zero-center the representations and transform the covariance matrix into an identity matrix, forcing the embedding distribution toward isotropy.
    \item \textbf{LW Shrinkage}~\cite{ledoit2004well}: Linearly shrink the sample covariance matrix toward the diagonal matrices to reduce noise interference in high-dimensional data, yielding more stable covariance matrix estimates. This operation mitigates excessive sensitivity in specific directions, promoting isotropic feature distributions.
\end{itemize}

These training-free paradigms provide references for decoupling anisotropy from compression. However, these methods maintain the linear geometric structure of the data, with eigenvalues still exhibiting consistent partitioning behavior. Figure~\ref{fig.multi_model_eigen} demonstrates the distribution changes in eigenvalues and their negative logarithms after applying these baseline anisotropy razor post-processing methods. The results show that the distributions after Remove Directions, Whitening, and LW-Shrinkage treatments retain their original forms, leaving cross-model relationships of the modified compression metrics relatively unchanged. Consequently, we propose principal component smoothing to force eigenvalues toward dominant features. As shown in Figure~\ref{fig.multi_model_eigen}, this approach induces significant changes in eigenvalue distributions.

\section{Statistical Properties of Principal Component Smoothing}

\begin{lemma}[Asymptotic Optimality of Ledoit-Wolf Shrinkage~\cite{ledoit2004well}]  
\label{lemma.LW_optimality}
Let $\Sigma \in \mathbb{R}^{D\times D}$ be the population covariance matrix and $\Sigma_{\mathbf{Z}} = \frac{1}{|\mathcal{V}|}\mathbf{Z}^\top\mathbf{Z}$ the sample covariance. The Ledoit-Wolf estimator 
\begin{equation}
\hat{\Sigma}_{\text{LW}} = (1-\beta_{\text{LW}})\Sigma_{\mathbf{Z}} + \beta_{\text{LW}}\mu\mathbf{I}, \quad \mu = \frac{1}{D}\operatorname{tr}(\Sigma_{\mathbf{Z}})
\end{equation}
attains minimal MSE when the shrinkage intensity satisfies $\beta_{\text{LW}} \asymp \frac{1}{|\mathcal{V}|}$. Under general covariance structures (without spectral sparsity), this yields asymptotic MSE:
\begin{equation}
\text{MSE}(\hat{\Sigma}_{\text{LW}}) \asymp \mathcal{O}\left(\frac{D}{|\mathcal{V}|}\right)
\end{equation}
\end{lemma}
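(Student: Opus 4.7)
The plan is to follow the classical Ledoit--Wolf decomposition and cast the problem as a one-dimensional quadratic optimization in the shrinkage intensity $\beta_{\text{LW}}$. Writing $\hat{\Sigma}_{\text{LW}} - \Sigma = (1-\beta_{\text{LW}})(\Sigma_{\mathbf{Z}} - \Sigma) + \beta_{\text{LW}}(\mu\mathbf{I} - \Sigma)$, I would expand the Frobenius-norm MSE and use the unbiasedness of $\Sigma_{\mathbf{Z}}$ to kill the cross term in expectation, yielding a clean additive decomposition into a variance component $(1-\beta_{\text{LW}})^2 \mathbb{E}\|\Sigma_{\mathbf{Z}} - \Sigma\|_F^2$ and a bias-toward-target component $\beta_{\text{LW}}^2 \mathbb{E}\|\mu\mathbf{I} - \Sigma\|_F^2$ (plus a lower-order term reflecting the randomness of $\mu$).

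Next, I would control each component separately. For the variance, standard fourth-moment bounds on the rows of $\mathbf{Z}$ give $\mathbb{E}\|\Sigma_{\mathbf{Z}} - \Sigma\|_F^2 = \mathcal{O}\!\left(\operatorname{tr}(\Sigma^2)/|\mathcal{V}|\right)$, and under the general (non-sparse) spectrum assumption $\operatorname{tr}(\Sigma^2) = \Theta(D)$, this term is $\Theta(D/|\mathcal{V}|)$. For the bias, $\|\mu\mathbf{I} - \Sigma\|_F^2 = \operatorname{tr}(\Sigma^2) - D\mu^2$ is $\Theta(1)$ on the normalized scale, so it acts as a constant. Minimizing the resulting quadratic in $\beta_{\text{LW}}$ yields the oracle intensity
\begin{equation}
\beta_{\text{LW}}^{\ast} \;=\; \frac{\mathbb{E}\|\Sigma_{\mathbf{Z}} - \Sigma\|_F^2}{\mathbb{E}\|\Sigma_{\mathbf{Z}} - \Sigma\|_F^2 + \|\mu\mathbf{I} - \Sigma\|_F^2} \;\asymp\; \frac{1}{|\mathcal{V}|},
\end{equation}
and plugging this back produces the advertised rate $\mathcal{O}(D/|\mathcal{V}|)$ for the minimized MSE.

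The main obstacle will be the variance estimate: controlling $\mathbb{E}\|\Sigma_{\mathbf{Z}} - \Sigma\|_F^2$ rigorously requires a moment condition (typically bounded eighth moments of the entries, or a sub-Gaussian assumption on $\mathbf{z}(\bm{w})$) so that the Isserlis-type expansion $\operatorname{Var}(\Sigma_{\mathbf{Z}})_{ij,kl} = \mathcal{O}(1/|\mathcal{V}|)$ holds uniformly. A secondary subtlety is that $\mu = D^{-1}\operatorname{tr}(\Sigma_{\mathbf{Z}})$ is itself random, so the bias term picks up an $\mathcal{O}(1/|\mathcal{V}|)$ fluctuation that must be shown to be absorbed in the leading order. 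I would defer a self-contained proof of these moment bounds to the original Ledoit--Wolf argument and simply verify that the token-embedding setting $\mathbf{z}(\bm{w})\in\mathbb{R}^D$ with unit-norm preprocessing meets the required regularity, so that the invoked lemma applies verbatim.
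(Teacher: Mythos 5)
The paper does not actually prove Lemma~\ref{lemma.LW_optimality}; it is stated as a cited result from Ledoit and Wolf (2004) and then invoked as a black box inside the proof of Theorem~\ref{theorem.pcs}. So your sketch is filling a gap the authors leave to the literature, not competing with an in-paper argument. Your high-level framework --- the decomposition $\hat{\Sigma}_{\text{LW}} - \Sigma = (1-\beta_{\text{LW}})(\Sigma_{\mathbf{Z}}-\Sigma) + \beta_{\text{LW}}(\mu\mathbf{I}-\Sigma)$, killing the cross term via unbiasedness, the quadratic-in-$\beta_{\text{LW}}$ oracle, and deferring moment bounds to the original argument --- is exactly the classical Ledoit--Wolf structure and is the right way to go.

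The concrete gap is in your variance estimate. You write $\mathbb{E}\|\Sigma_{\mathbf{Z}}-\Sigma\|_F^2 = \mathcal{O}(\operatorname{tr}(\Sigma^2)/|\mathcal{V}|)$, but this silently drops the generically dominant contribution. For zero-mean rows with finite fourth moments (Gaussian gives the exact constants) one has
\begin{equation*}
\mathbb{E}\bigl\|\Sigma_{\mathbf{Z}}-\Sigma\bigr\|_F^2 \;=\; \frac{1}{|\mathcal{V}|}\Bigl[(\operatorname{tr}\Sigma)^2 + \operatorname{tr}(\Sigma^2)\Bigr],
\end{equation*}
and by Cauchy--Schwarz $(\operatorname{tr}\Sigma)^2$ can exceed $\operatorname{tr}(\Sigma^2)$ by a factor of $D$. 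Under the ``general spectrum'' assumption you invoke ($\lambda_d = \Theta(1)$, hence $\operatorname{tr}\Sigma = \Theta(D)$), the true Frobenius variance is $\Theta(D^2/|\mathcal{V}|)$, not $\Theta(D/|\mathcal{V}|)$ --- and indeed that is precisely the $\mathcal{O}(D^2/|\mathcal{V}|)$ figure the paper itself uses two lines into the proof of Theorem~\ref{theorem.pcs}. The $\mathcal{O}(D/|\mathcal{V}|)$ rate in the lemma statement is recoverable only if one adopts, as Ledoit and Wolf do, the dimension-normalized Frobenius norm $\|A\|^2 = \operatorname{tr}(AA^\top)/D$, under which both bias and variance pick up a factor of $1/D$. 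Your sketch mixes these two normalizations: the oracle-intensity formula and the $\beta_{\text{LW}}^{\ast}\asymp 1/|\mathcal{V}|$ conclusion are written in the LW-normalized convention, but the variance bound is stated as a raw Frobenius quantity and then, on top of that, misses the $(\operatorname{tr}\Sigma)^2$ term. You land on the advertised $D/|\mathcal{V}|$ only because two errors partially cancel. To make the argument rigorous, fix a single convention, carry the $(\operatorname{tr}\Sigma)^2$ term, and track explicitly how the $\Theta(1)$-per-eigenvalue normalization turns $(\operatorname{tr}\Sigma)^2/(D|\mathcal{V}|)$ into the claimed $D/|\mathcal{V}|$; the rest of your plan (fourth/eighth moment conditions, absorbing the $\mathcal{O}(1/|\mathcal{V}|)$ fluctuation of $\mu$) is sound.
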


\begin{theorem}[Statistical Stability of the Principal Component Smoothing Estimator]
\label{theorem.pcs}
Assume the true covariance matrix $\Sigma$ has a dominant eigenvalue 
\(\lambda_1^* = \max_d \lambda_d \gg \lambda_d^* \quad (d \geq 2)\),
i.e., spectral sparsity holds. Define the improved shrinkage estimator as:
\begin{equation}
\begin{aligned}
&\hat{\Sigma}_{\text{PCS}} \\&= (1-\beta_{\text{PCS}})\Sigma_{\mathbf{Z}} + \beta_{\text{PCS}}\lambda_1 \mathbf{I}, 
\beta_{\text{PCS}} \asymp \mathcal{O}\left( \frac{1}{\sqrt{|\mathcal{V}|}} \right) 
\end{aligned}
\end{equation}
where $\lambda_1$ is the largest eigenvalue of the sample covariance matrix $\Sigma_{\mathbf{Z}}$ and satisfies 
$\lambda_1 \xrightarrow{|\mathcal{V}|} \lambda_1^*$ in probability. When the sample size $|\mathcal{V}|$ is sufficiently large,
\begin{equation}
\operatorname{MSE}(\hat{\Sigma}_{\text{PCS}}) < \operatorname{MSE}(\hat{\Sigma}_{\text{LW}})
\end{equation}
\end{theorem}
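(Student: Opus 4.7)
The plan is to reduce the comparison to an explicit leading-order bias--variance accounting under the sparsity hypothesis. For any shrinkage estimator $\hat\Sigma = (1-\beta)\Sigma_{\mathbf{Z}} + \beta T$ I would begin from
\begin{equation}
\operatorname{MSE}(\hat\Sigma) = \bigl\|\mathbb{E}[\hat\Sigma] - \Sigma\bigr\|_F^2 + \mathbb{E}\bigl\|\hat\Sigma - \mathbb{E}[\hat\Sigma]\bigr\|_F^2,
\end{equation}
specialising to $T=\mu\mathbf{I}$ for LW and $T=\lambda_1\mathbf{I}$ for PCS. The LW side is read off from Lemma~\ref{lemma.LW_optimality}, so the work is concentrated in bounding these two terms for PCS and then identifying the regime in which sparsity reverses the comparison.

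To bound the PCS bias I would use the consistency statements $\mu \to \lambda_1^*/D$ and $\lambda_1 \to \lambda_1^*$ (both in probability under sparsity) to approximate
\begin{equation}
\|\mu^*\mathbf{I} - \Sigma\|_F^2 \approx \lambda_1^{*2}\bigl(1 - \tfrac{1}{D}\bigr)^2, \qquad \|\lambda_1^*\mathbf{I} - \Sigma\|_F^2 \approx \sum_{d\geq 2}(\lambda_1^*-\lambda_d^*)^2,
\end{equation}
so that the two bias contributions scale as $\beta_{\text{LW}}^2\lambda_1^{*2}$ and $\beta_{\text{PCS}}^2(D-1)\lambda_1^{*2}$ respectively. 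For the variance I would decompose the PCS variance into a sample-covariance part $(1-\beta_{\text{PCS}})^2\operatorname{Var}(\Sigma_{\mathbf{Z}})$, a target part $\beta_{\text{PCS}}^2 D\operatorname{Var}(\lambda_1)$, and a cross term controlled by Cauchy--Schwarz, invoking the standard $\sqrt{|\mathcal{V}|}$-rate for the leading sample eigenvalue under a non-vanishing eigengap to get $\operatorname{Var}(\lambda_1) = O(\lambda_1^{*2}/|\mathcal{V}|)$, together with the sparsity-reduced bound $\operatorname{Var}(\Sigma_{\mathbf{Z}}) = O(\lambda_1^{*2}/|\mathcal{V}|)$ that follows from the Gaussian-type identity $\operatorname{Var}(S_{ij}) \propto \Sigma_{ii}\Sigma_{jj}+\Sigma_{ij}^2$. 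Substituting $\beta_{\text{LW}}\asymp 1/|\mathcal{V}|$ and $\beta_{\text{PCS}}\asymp 1/\sqrt{|\mathcal{V}|}$, collecting leading terms, and subtracting, I would conclude that the stronger shrinkage in PCS cancels a dominant variance contribution that LW cannot remove without sacrificing its own optimal asymptotic $\beta$, which under sparsity more than compensates for the inflated bias target.

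The hard part will be handling the nonlinearity of the PCS target. Unlike $\mu$, which is linear in $\Sigma_{\mathbf{Z}}$, the top eigenvalue $\lambda_1$ is a nonsmooth functional, so its bias, variance, and covariance with $\Sigma_{\mathbf{Z}}$ require a perturbation argument rather than direct algebra. I expect to expand $\lambda_1$ to first order along its population eigenvector $\mathbf{q}_1^*$, writing $\lambda_1 = {\mathbf{q}_1^*}^\top \Sigma_{\mathbf{Z}} \mathbf{q}_1^* + R$, and to control the remainder $R$ via a Davis--Kahan bound that uses the eigengap $\lambda_1^*-\lambda_2^*$ (guaranteed by spectral sparsity) as a denominator. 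If that remainder is indeed $o(1/\sqrt{|\mathcal{V}|})$, the cross-covariance contributes only at sub-leading order and the comparison reduces cleanly to the terms above; otherwise a sharper finite-sample concentration bound for $\lambda_1$ will be needed to close the argument.
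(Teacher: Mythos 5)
Your setup is the same as the paper's: the Frobenius MSE split into squared bias and variance, the bias targets $\|\mu\mathbf{I}-\Sigma\|_F^2\approx\lambda_1^{*2}\bigl(1-\tfrac{1}{D}\bigr)^2$ versus $\|\lambda_1^*\mathbf{I}-\Sigma\|_F^2\approx(D-1)\lambda_1^{*2}$, and the rates $\beta_{\text{LW}}\asymp 1/|\mathcal{V}|$, $\beta_{\text{PCS}}\asymp 1/\sqrt{|\mathcal{V}|}$; your plan to linearize $\lambda_1$ along $\mathbf{q}_1^*$ and control the remainder by a Davis--Kahan bound is in fact more careful than the paper, which simply asserts $\operatorname{Var}(\lambda_1)\asymp\mathcal{O}(\lambda_1^{*2}/|\mathcal{V}|)$ and drops all cross terms. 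The gap is in your final synthesis step. Because you (consistently) apply the sparsity-reduced bound $\operatorname{Var}(\Sigma_{\mathbf{Z}})=\mathcal{O}(\lambda_1^{*2}/|\mathcal{V}|)$ to the common sample-covariance component of \emph{both} estimators, the only variance advantage PCS gains from its larger shrinkage weight is $\bigl[(1-\beta_{\text{LW}})^2-(1-\beta_{\text{PCS}})^2\bigr]\operatorname{Var}(\Sigma_{\mathbf{Z}})=\mathcal{O}\bigl(\lambda_1^{*2}/|\mathcal{V}|^{3/2}\bigr)$, which is asymptotically smaller than the bias penalty $\beta_{\text{PCS}}^2(D-1)\lambda_1^{*2}\asymp(D-1)\lambda_1^{*2}/|\mathcal{V}|$ that PCS pays for shrinking toward $\lambda_1\mathbf{I}$. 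Summing your own leading terms therefore gives $\operatorname{MSE}(\hat{\Sigma}_{\text{PCS}})-\operatorname{MSE}(\hat{\Sigma}_{\text{LW}})\asymp(D-1)\lambda_1^{*2}/|\mathcal{V}|>0$, i.e.\ the opposite of the claimed inequality: the ``dominant variance contribution that LW cannot remove'' does not exist at this order, so the compensation argument cannot close as written.

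The paper reaches the stated inequality only through an asymmetric accounting that your plan does not reproduce: it charges LW the non-sparse variance $\mathcal{O}(D^2/|\mathcal{V}|)$ inherited from Lemma~\ref{lemma.LW_optimality} (explicitly stated there ``under general covariance structures (without spectral sparsity)''), while crediting PCS with the sparsity-reduced variance $\mathcal{O}\bigl((D-1)\lambda_1^{*2}/|\mathcal{V}|\bigr)$, and then concludes from the ratio $\lambda_1^{*2}/D\ll 1$, which amounts to an implicit high-dimensional regime $D\gg\lambda_1^{*2}$. To prove the theorem you must either adopt that regime and accounting explicitly (and justify why the two estimators' shared $\Sigma_{\mathbf{Z}}$ component is evaluated at different rates), or find a different argument altogether; the perturbation and Davis--Kahan refinements you propose only sharpen terms that are already sub-leading and will not change the sign of the comparison.
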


\begin{proof}
We commence by analyzing the mean squared error (MSE) structure of covariance matrix estimators. Let $\|\cdot\|_F$ denote the Frobenius norm, the MSE decomposes into bias and variance components:
\begin{equation}
\text{MSE}(\hat{\Sigma}) = \underbrace{\left\| \mathbb{E}[\hat{\Sigma}] - \Sigma \right\|_F^2}_{\text{Bias}^2} + \underbrace{\mathbb{E}\left[ \left\| \hat{\Sigma} - \mathbb{E}[\hat{\Sigma}] \right\|_F^2 \right]}_{\text{Variance}}.
\end{equation}

For the Ledoit-Wolf estimator $\hat{\Sigma}_{\text{LW}} = (1-\beta_{\text{LW}})\Sigma_{\mathbf{Z}} + \beta_{\text{LW}}\mu\mathbf{I}$, under spectral sparsity $\lambda_1^* \gg \sum_{d=2}^D\lambda_d^*/D$, the shrinkage target $\mu \approx \lambda_1^*/D$ creates dominant bias from the leading eigenvalue:

\begin{equation}
\begin{aligned}
\text{Bias}_{\text{LW}}^2 &\approx \beta_{\text{LW}}^2 \left\| \Sigma - \mu\mathbf{I} \right\|_F^2 \\&= \beta_{\text{LW}}^2\left[ (\lambda_1^* - \mu)^2 + \sum_{d=2}^D(\lambda_d^* - \mu)^2 \right] \\&\asymp \beta_{\text{LW}}^2(\lambda_1^*)^2\left(1-\tfrac{1}{D}\right)^2
\end{aligned}
\end{equation}

According to lemma~\ref{lemma.LW_optimality}, the variance term inherits from sample covariance matrix with dimension scaling:

\begin{equation}
\text{Variance}_{\text{LW}} \approx (1-\beta_{\text{LW}})^2\cdot\mathcal{O}\left(\tfrac{D^2}{|\mathcal{V}|}\right) \asymp \mathcal{O}\left(\tfrac{D^2}{|\mathcal{V}|}\right)
\end{equation}
where the $\mathcal{O}(D^2/|\mathcal{V}|)$ scaling comes from concentration of sample covariance in high dimensions.

For our eigenvalue-shrinkage estimator $\hat{\Sigma}_{\text{PCS}} = (1-\beta_{\text{PCS}})\Sigma_{\mathbf{Z}} + \beta_{\text{PCS}}\lambda_1\mathbf{I}$, the preserved leading eigenvalue estimation $\lambda_1 \xrightarrow{p} \lambda_1^*$ fundamentally alters the bias-variance tradeoff. The bias now originates from minor eigenvalues:
\begin{equation}
\text{Bias}_{\text{PCS}}^2 = \beta_{\text{PCS}}^2\sum_{d=2}^D(\lambda_d^* - \lambda_1^*)^2 \asymp \beta_{\text{PCS}}^2(D-1)(\lambda_1^*)^2
\end{equation}
where the last approximation uses $\lambda_d^* \ll \lambda_1^*$ from spectral sparsity. The variance term splits into two parts:
\begin{equation}
\begin{aligned}
\text{Variance}_{\text{PCS}} &= (1-\beta_{\text{PCS}})^2\underbrace{\text{Var}\left(\sum_{d=2}^D\lambda_d\right)}_{\asymp \mathcal{O}\left(\tfrac{(D-1)\lambda_1^{*2}}{|\mathcal{V}|}\right)} \\&+ \beta_{\text{PCS}}^2\underbrace{\text{Var}(\lambda_1)}_{\asymp \mathcal{O}\left(\tfrac{\lambda_1^{*2}}{|\mathcal{V}|}\right)}    
\end{aligned}
\end{equation}
With optimal shrinkage intensity $\beta_{\text{PCS}} = \mathcal{O}(1/\sqrt{|\mathcal{V}|})$, the dominant variance term becomes:
\begin{equation}
\text{Variance}_{\text{PCS}} \asymp \mathcal{O}\left(\tfrac{(D-1)\lambda_1^{*2}}{|\mathcal{V}|}\right).
\end{equation}

The MSE comparison reveals fundamental differences in scaling laws. For $\hat{\Sigma}_{\text{LW}}$ with $\beta_{\text{LW}} = \mathcal{O}(1/|\mathcal{V}|)$:
\begin{equation}
\text{MSE}(\hat{\Sigma}_{\text{LW}}) \asymp \underbrace{\mathcal{O}\left(\tfrac{\lambda_1^{*2}}{|\mathcal{V}|^2}\right)}_{\text{Bias}^2} + \underbrace{\mathcal{O}\left(\tfrac{D^2}{|\mathcal{V}|}\right)}_{\text{Variance}}.
\end{equation}
For $\hat{\Sigma}_{\text{PCS}}$ with dimension-adaptive shrinkage:
\begin{equation}
\text{MSE}(\hat{\Sigma}_{\text{PCS}}) \asymp \underbrace{\mathcal{O}\left(\tfrac{(D-1)\lambda_1^{*2}}{|\mathcal{V}|}\right)}_{\text{Bias}^2} + \underbrace{\mathcal{O}\left(\tfrac{(D-1)\lambda_1^{*2}}{|\mathcal{V}|}\right)}_{\text{Variance}}.
\end{equation}

When $|\mathcal{V}| \to \infty$, the $\mathcal{O}(1/|\mathcal{V}|)$ terms dominate $\mathcal{O}(1/|\mathcal{V}|^2)$. Under spectral sparsity $\lambda_1^* \gg \lambda_d^*$ ($d\geq2$), the improvement ratio becomes:
\begin{equation}
\frac{\text{MSE}(\hat{\Sigma}_{\text{PCS}})}{\text{MSE}(\hat{\Sigma}_{\text{LW}})} \asymp \frac{D\lambda_1^{*2}/|\mathcal{V}|}{D^2/|\mathcal{V}|} = \frac{\lambda_1^{*2}}{D} \ll 1,
\end{equation}
where the inequality follows from $\lambda_1^{*2}/D \leq (\sum_{d=1}^D\lambda_d^*)^2/D^2$ by Cauchy-Schwarz. 
\end{proof}

\section{Significance Analysis}

\begin{table}[t]
\centering
\begin{tabular}{l|cc}
\hline\hline
\textbf{Model}        & \textbf{R²} & \textbf{p-value} \\ \hline
LLaMA3.1-8B           & 0.89        & ****             \\ 
LLaMA3.1-8B-Instruct  & 0.79        & ***              \\ 
LLaMA3.2-1B           & 0.80        & ****             \\ 
LLaMA3.2-1B-Instruct  & 0.78        & ***              \\ 
LLaMA3.2-3B           & 0.89        & ****             \\ 
LLaMA3.2-3B-Instruct  & 0.77        & ****             \\
OPT-0.125B            & 0.88        & ****             \\ 
OPT-1.3B              & 0.76        & ****             \\ 
OPT-2.7B              & 0.66        & **                \\ 
OPT-6.7B              & 0.91        & ****             \\ 
OPT-13B               & 0.80        & ***              \\ 
OPT-30B               & 0.83        & ****             \\ 
Qwen2.5-0.5B-Instruct & 0.81        & ****             \\ 
Qwen2.5-1.5B-Instruct & 0.86        & ***              \\ 
Qwen2.5-3B-Instruct   & 0.80        & ****             \\ 
Qwen2.5-7B-Instruct   & 0.79        & ****             \\ 
Qwen2.5-14B-Instruct  & 0.85        & ****             \\ 
Qwen2.5-32B-Instruct  & 0.83        & ****             \\ \hline\hline
\end{tabular}
\caption{The R² and p-values of the compression-anisotropy regression fitting curves across different models, where, **, ***, and **** denote statistical significance at the 1\%, 0.1\%, and 0.01\% levels respectively.}\label{fig.s1}
\end{table}

\begin{figure}[t]
\centering
\includegraphics[width=1\linewidth]{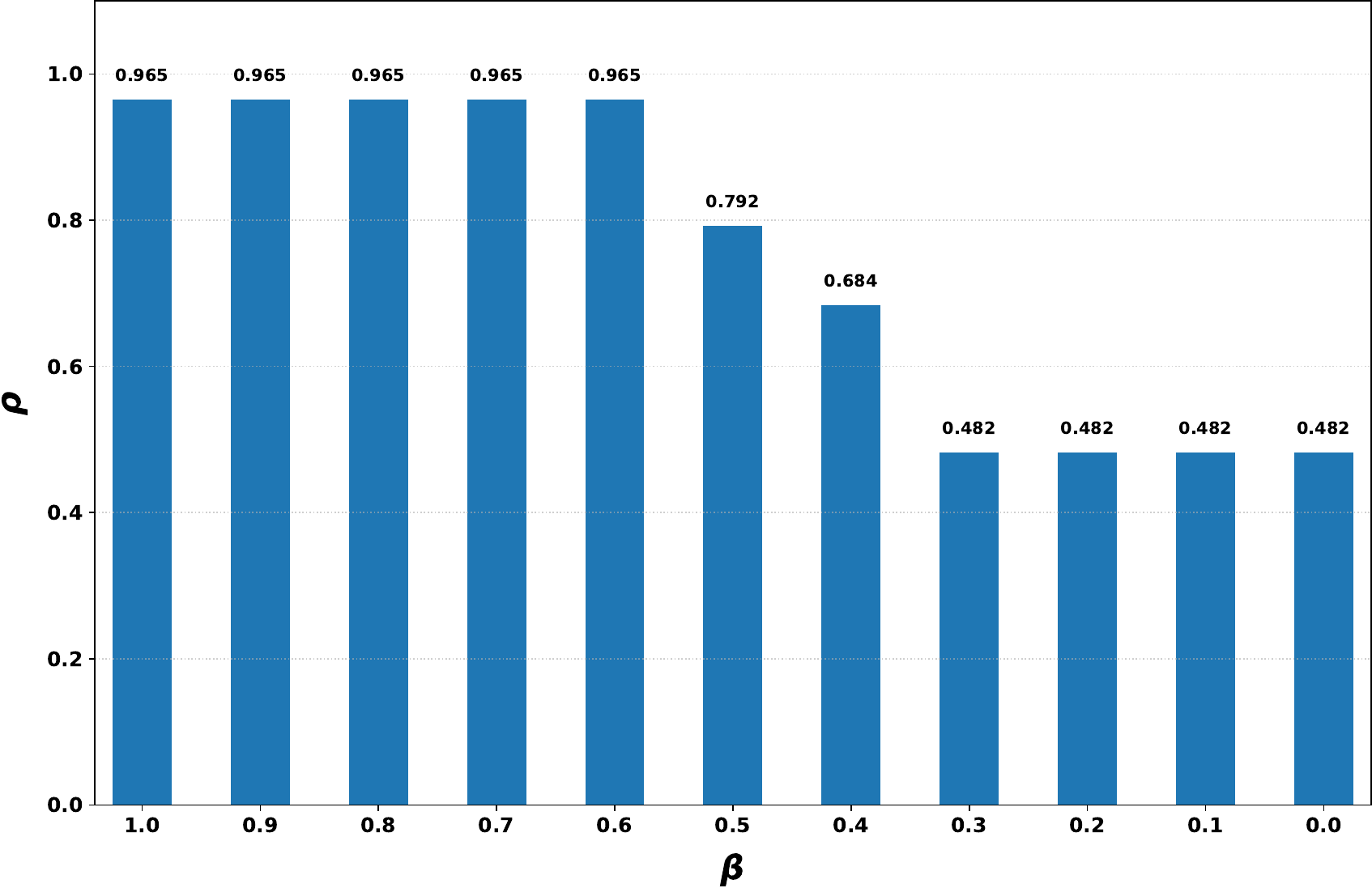}
\caption{The correlation coefficients between compression (PCS) and ground truth under different smoothing coefficients.}\label{fig.beta}
\end{figure}

Our evaluation results presented in Table~\ref{fig.s1} demonstrate a strong and statistically significant relationship between compression and anisotropy across the 18 open-source language models examined. The high R² values (ranging from 0.7 to 0.9 for most models) indicate that linguistic anisotropy accounts for a substantial proportion of the observed compression phenomena. Furthermore, the compression-anisotropy synchronization proves statistically significant at stringent confidence levels (p<0.001 or p<0.01) for the majority of models. These robust and consistent findings across diverse architectures provide compelling empirical evidence that compression hacking is not merely an artifact but rather an intrinsic and fundamental characteristic of language model representations, revealing important insights about their underlying geometric properties.

\begin{figure}[h]
\centering
\includegraphics[width=1\linewidth]{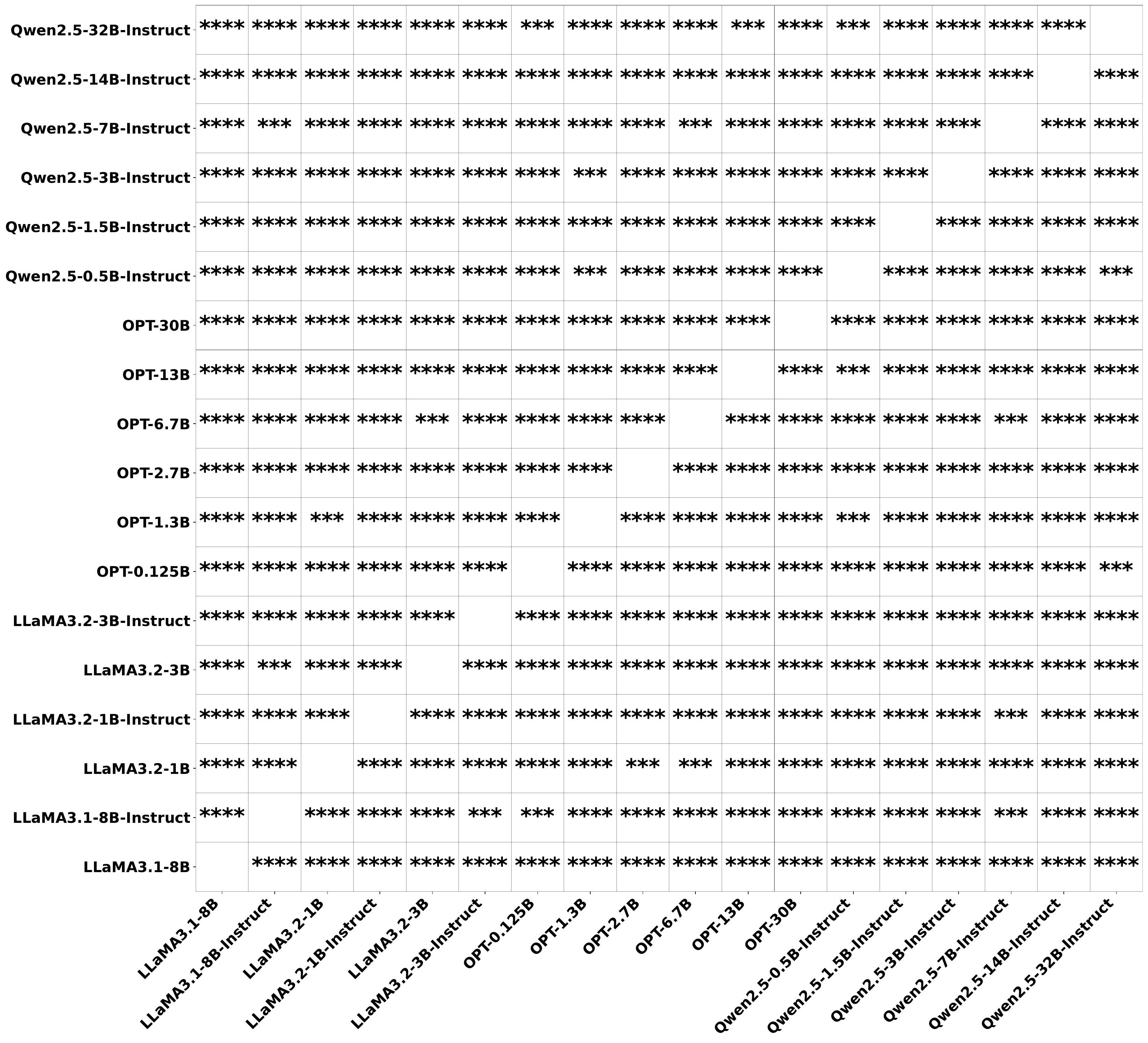}
\caption{The Mann-Whitney U tests of compression-anisotropy regression fitting between different models, where *** and **** denote statistical significance at the 0.1\% and 0.01\% levels respectively.}\label{fig.s2}
\end{figure}

Figure~\ref{fig.s2} presents the Mann-Whitney U test results for compression-anisotropy regression fitting across different models. Our analysis reveals that the differences between most model pairs achieve statistical significance at rigorous levels. These statistically significant variations in compression-anisotropy fitting curves demonstrate that the information compression metric, when adjusted for compression hacking effects, can effectively capture meaningful distinctions in model capabilities. This finding provides empirical validation that our refined compression-based evaluation framework offers discriminative power for comparing performance differences across language model architectures.

\begin{figure*}[h]
\centering
\includegraphics[width=1\textwidth]{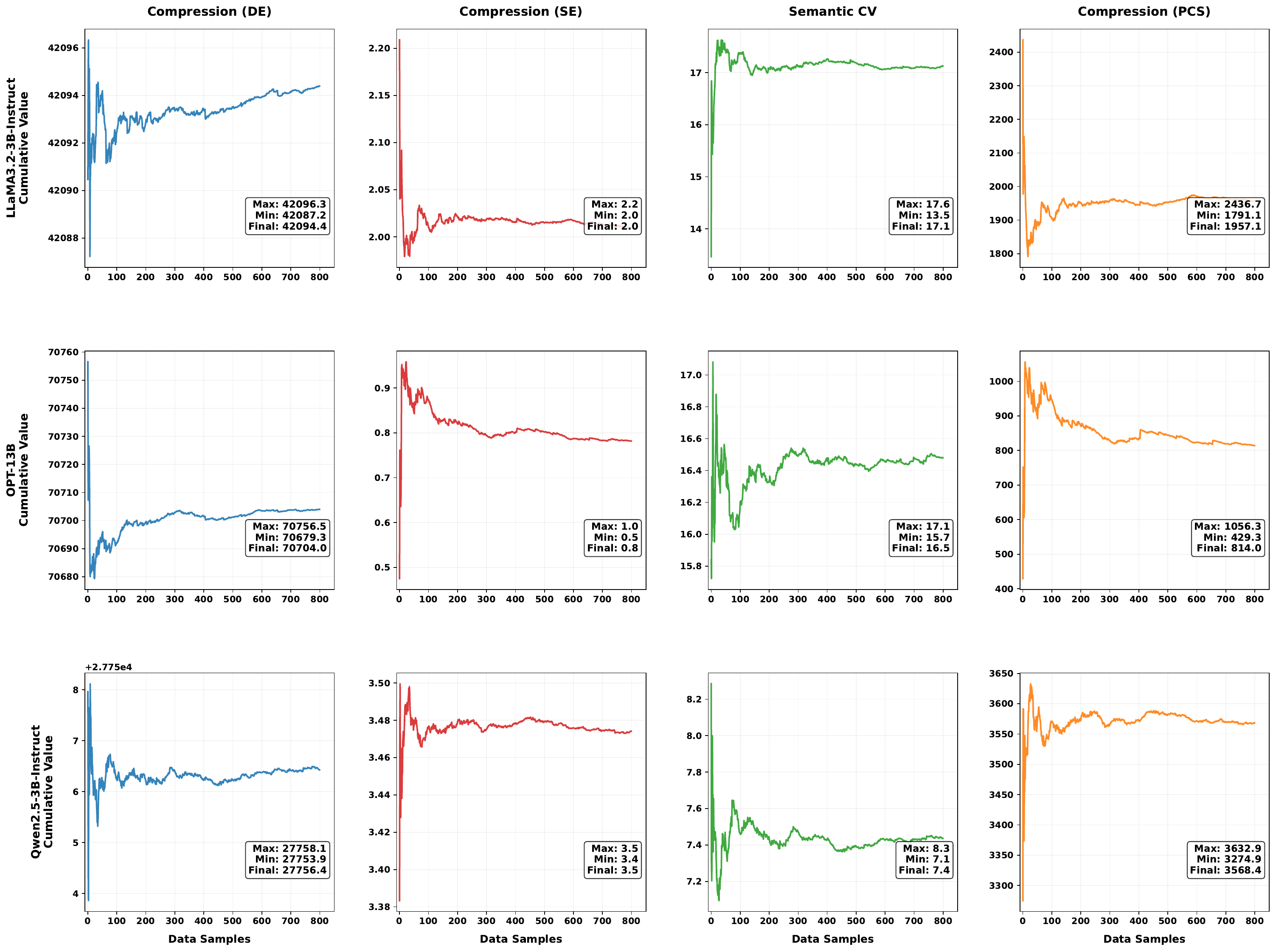}
\caption{The cumulative expected values of different metrics as the number of samples increases.}\label{fig.sample}
\end{figure*}

\section{Implementation Details of the Evaluation Pipeline}\label{sec.eval_detail}

For the projection dataset, we primarily collected 1,000 data samples from the pretraining corpus (Wiki~\cite{wikimedia_downloads}) and the instruction-tuning dataset (Dolly-15k~\cite{conover2023free}) to derive projection data. By sampling the in-context representations of these data points, we aim to estimate the full model’s representation space, ensuring the convergence of our metrics. Our pipeline defaults to sampling 800 data samples. Figure~\ref{fig.sample} illustrates the cumulative expected values of different metrics as the number of samples increases. We observe that all metrics converge relatively early to stable values, demonstrating that our refined metrics enable robust evaluation based on the provided projection dataset.

For the hyperparameter \(\alpha\) that ensures full-rank covariance matrices, we selected \(10^{-8}\). Regarding the smoothing coefficient (\(\beta\)) for principal component smoothing, we determined the interval \([0.6,1]\) to be appropriate. Figure~\ref{fig.sample} illustrates how different choices of principal component smoothing coefficients affect the compression (PCS). It can be observed that when \(\beta\) falls within \([0.6,1]\), the results maintain strong correlation with the ground truth. This occurs because the principal directions already dominate the compression computation. As the smoothing coefficient decreases, noise directions gradually regain prominence in the compression calculation.


\end{document}